\documentclass[lettersize,journal]{IEEEtran}
\usepackage[commentColor=black,noEnd]{algpseudocodex}

\usepackage{amsmath,amsfonts}
\usepackage[commentColor=black]{algpseudocodex}
\tikzset{algpxIndentLine/.style={draw=black}}
\algrenewcommand{\alglinenumber}[1]{\bfseries\footnotesize #1}
\algrenewcommand{\textproc}{}
\algrenewcommand{\algorithmicrequire}{\textbf{Input:}}
\algrenewcommand{\algorithmicensure}{\textbf{Output:}}
\usepackage{algorithm}
\usepackage{array}
\usepackage{orcidlink}
\usepackage{textcomp}
\usepackage{stfloats}
\usepackage{url}
\usepackage{verbatim}
\usepackage{graphicx}
\usepackage{cite}
\usepackage[table]{xcolor}
\usepackage[utf8]{inputenc}
\usepackage[T1]{fontenc}
\usepackage{hyperref}
\usepackage{booktabs}
\usepackage{nicefrac}
\usepackage{microtype}
\usepackage[para,online,flushleft]{threeparttable}
\usepackage{pifont}
\usepackage{multirow}
\usepackage{amsthm}
\usepackage{subcaption}
\usepackage{refcount}

\newtheorem{assumption}{Assumption}

\newtheorem{lemma}{Lemma}
\newtheorem{theorem}{Theorem}

\usepackage[noEnd]{algpseudocodex}
\usepackage{setspace}

\begin{document}

\title{COOPO: Cyclic Offline-Online Policy Optimization Algorithm}

\author{%
    Qisai~Liu\IEEEauthorrefmark{1}\IEEEauthorrefmark{2},
    Zhanhong~Jiang\IEEEauthorrefmark{4},
    Joshua~R.~Waite\IEEEauthorrefmark{4},
    Aditya Balu\IEEEauthorrefmark{4},
    Cody Fleming\IEEEauthorrefmark{1},
    Soumik~Sarkar\IEEEauthorrefmark{1}\IEEEauthorrefmark{2}\IEEEauthorrefmark{4},
    \thanks{\IEEEauthorrefmark{1}Department of Mechanical Engineering, Iowa State University, Ames, IA 50011, USA.}%
    \thanks{\IEEEauthorrefmark{2}Department of Computer Science, Iowa State University, Ames, IA 50011, USA.}%
    \thanks{\IEEEauthorrefmark{3}Department of Industrial and Manufacturing Systems Engineering, Iowa State University, Ames, IA 50011, USA.}%
    \thanks{\IEEEauthorrefmark{4}Translational AI Center, Iowa State University, Ames, IA 50011, USA.}%
    \thanks{Corresponding author: Soumik~Sarkar (e-mail: soumiks@iastate.edu)}%
}

\markboth{Journal of \LaTeX\ Class Files,~Vol.~14, No.~8, August~2021}%
{Shell \MakeLowercase{\textit{et al.}}: A Sample Article Using IEEEtran.cls for IEEE Journals}

\maketitle

\begin{abstract}
Offline reinforcement learning struggles with distributional shift and constrained performance due to static dataset limitations, while online RL demands prohibitive environment interactions. The recent advent of hybrid offline-to-online methods bridges these paradigms but suffers from distributional shift during transitions and catastrophic forgetting of offline knowledge. We introduce \textit{COOPO} (Cyclic Offline-Online Policy Optimization), a generalized framework that repeatedly cycles between constrained offline training and online fine-tuning. Each cycle first anchors the policy to the dataset via KL-regularized advantage-weighted offline updates to minimize distributional shift and then fine-tunes it online using any policy optimization for stable exploration. Crucially, periodically returning to offline training eliminates forgetting and drift while maximizing dataset reuse. The cyclic behavior also helps reduce the online environment interactions. Theoretically, COOPO achieves better online sample efficiency, surpassing pure online RL, with guaranteed performance improvement under standard coverage assumptions. Extensive D4RL benchmarks demonstrate COOPO reduces online interactions versus state-of-the-art hybrids while improving final returns. This looped synergy sets new efficiency and performance standards for adaptive RL. 
\end{abstract}

\begin{IEEEkeywords}
Offline-online leaning, Policy Optimization, Reinforcement Learning, Sample complexity, Markov decision process
\end{IEEEkeywords}

\section{Introduction}

Reinforcement learning (RL) has demonstrated compelling performance and remarkable success in numerous fields over the past decade, such as robotics~\cite{brunke2022safe,singh2022reinforcement},
manufacturing~\cite{li2023deep}, medical imaging~\cite{zhou2021deep}, and reasoning with large language models (LLMs)~\cite{guo2025deepseek}. In widespread online RL approaches such as Proximal Policy Optimization (PPO)~\cite{schulman2017proximal} and Soft Actor-Critic (SAC)~\cite{haarnoja2018soft}, an agent learns the optimal policy through interactions with an environment. However, the requirement of exploration computationally poses the issue of prohibitive sample complexity. Fortunately, in many real-world cases, a rich abundance of logged data collected from experts is accessible to practitioners such that offline RL~\cite{prudencio2023survey} provides a promising alternative to learn polices. Without directly interacting with the environment, offline RL tends to learn a policy exclusively from a fixed dataset of pre-collected experiences. Nevertheless, due to the need to mitigate data drift during learning, offline RL usually suffers from policy suboptimality, limiting its applicability in unseen tasks. The dilemma of online and offline RL has recently motivated work~\cite{guo2023sample, song2022hybrid} to combine these two lines of research, whereby an optimal policy is achieved by an agent first learning from an offline dataset and then interacting with an environment for further improvement. While conceptually appealing, emerging offline-to-online RL methods endure \textit{catastrophic forgetting}~\cite{luo2023finetuning} that shows significant performance degradation when transitioning from offline learning to online fine-tuning. Since there is a distributional shift between offline and online data, the knowledge learned offline is overwritten during online fine-tuning.

In cyber-physical systems (CPS) such as autonomous driving, additive manufacturing, and energy grids, physical constraints and safety limitations make large-scale online exploration infeasible. Traditional reinforcement learning lacks guarantees when deployed in real-world applications due to its sample inefficiency and concerns about robustness. From a control perspective, such systems ensure stability and safety under limited data and uncertain dynamics, as well as resilience against cyber and communication disruptions that can compromise sensing or actuation. Recent studies have shown that data poisoning, unknown disturbances, and model manipulation can mislead learning-based controllers, highlighting the necessity for algorithms that remain stable under distribution shifts or adversarial perturbations.

To mitigate this issue, a recent work proposed an optimistic exploration and meta adaptation (OEMA)~\cite{guo2023sample} to stabilize online exploration and reduce the distributional shift during the offline-to-online transition, thus improving the sample efficiency.
A more straightforward method is to blend offline data with online interactions in the replay buffer by using off-policy techniques~\cite{song2022hybrid,ball2023efficient}, even without offline RL pre-training. While reducing catastrophic performance drops, these methods put the efficacy and necessity of contemporary RL fine-tuning methods into question. To validate online RL fine-tuning, a more recent work~\cite{zhou2024efficient} proposed Warm Start Reinforcement Learning (WSRL) to employ a warmup phase at the beginning of online fine-tuning by utilizing a small number of rollouts from the offline pre-trained policy, bridging the distribution mismatch. Even if WSRL does not retain offline data, it is unclear how long the warm up phase should be for different environments. Additionally, most existing methods have focused primarily on improving the performance when transitioning from offline pre-training to online fine-tuning, ignoring the sample efficiency itself in fine-tuning. Thus, a question naturally arises:
    \textit{Can we develop an efficient offline-to-online RL algorithm to mitigate catastrophic forgetting and reduce online interactions with the environment?}

\noindent\textbf{Contributions.} To answer this question, we introduce \textit{COOPO} (Cyclic Offline-Online Policy Optimization), a unified framework that resolves the critical limitations of hybrid reinforcement learning distribution drift during offline-to-online transitions and catastrophic forgetting of offline knowledge during online fine-tuning. By innovatively looping between Kullback-Liebler (KL)-regularized offline optimization and online adaptation (as shown in Figure~\ref{fig:coopo_curve}), COOPO dynamically anchors policies to the static dataset during offline phases, constraining distributional shift via advantage-weighted updates. It interleaves short online exploration bursts with periodic returns to offline training, eliminating forgetting while maximizing dataset reuse. In the meantime, it improves both learning stability and defense against data or policy drift in safety-critical CPS environments. COOPO formalizes a cyclic synergy where offline phases provide stability and online phases enable adaptation, transforming the coverage limitations of offline RL and sample inefficiency of online RL into complementary strengths. This architecture sets a new paradigm for sample-efficient and stable adaptive learning with theoretical and empirical guarantees. When there is only one cycle, COOPO degenerates to regular offline-to-online RL. 
The main contributions can be summarized as: 1) We propose a generalized cyclic framework alternating KL-regularized offline training and adaptive online fine-tuning with periodic dataset anchoring to prevent distributional shift and catastrophic forgetting. 2) We theoretically show the performance difference bound, presenting per-cycle return increase under data coverage. Additionally, COOPO achieves better online sample efficiency, surpassing pure online RL algorithms. Please see Table~\ref{table:complexity_com} for method comparison. 3) We show that COOPO achieves competitive or superior performance compared to state-of-the-art baselines on the widely-used D4RL benchmarks and demonstrate its effectiveness in the offline-to-online RL setting.

\noindent\textbf{Control-Theoretic and CPS Perspective.}
From a control-theoretic viewpoint, the proposed COOPO framework can be interpreted as a closed-loop adaptive control mechanism operating over hybrid data and physical dynamics. 
In traditional control, system stability is maintained by ensuring that feedback updates stay within a bounded region of operation. COOPO achieves a similar effect through its cyclic KL-regularized updates, which restrict policy divergence between consecutive cycles, thus enforcing a form of discrete-time stability in policy space. 
Even under imperfect sensing or cyber perturbations, COOPO's repeated realignment to the offline dataset functions as a corrective stabilizing term that regularly updates or corrects its estimate of the system’s internal state in a robust and fault-tolerant control system. 


\begin{figure}
    \centering
    \includegraphics[width=0.8\linewidth]{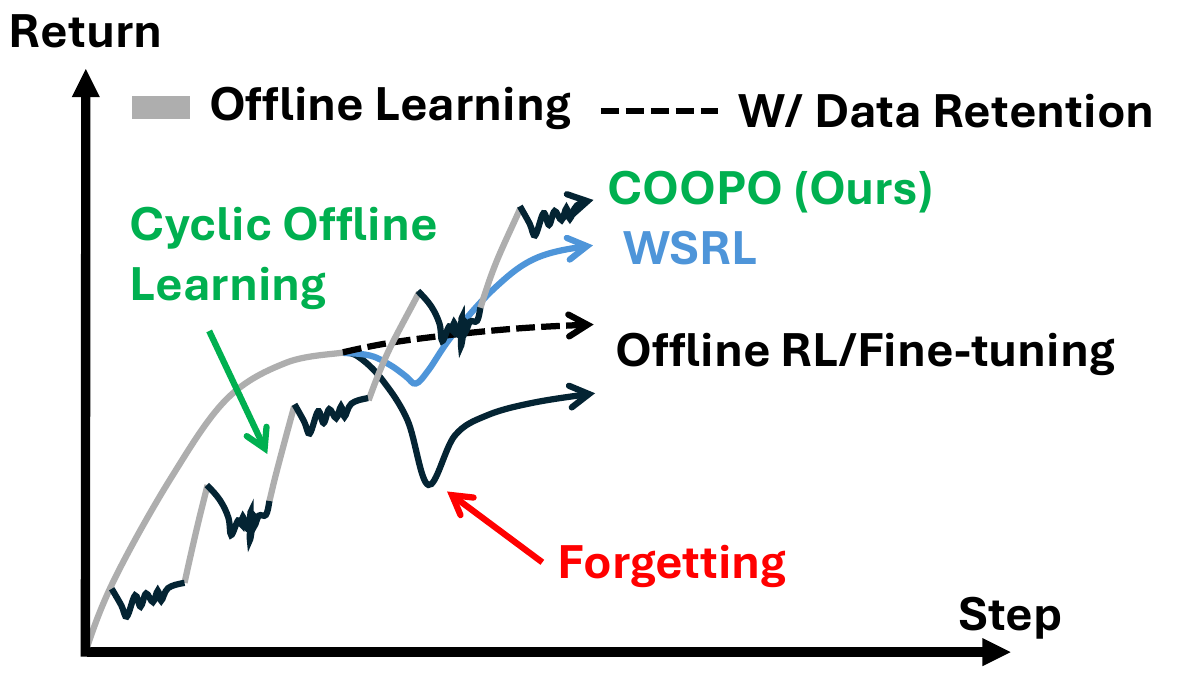}
    \caption{Cyclic offline-online policy optimization: Traditional methods pre-train offline and then fine-tune online, but a single offline phase often leads to \textcolor{red}{catastrophic forgetting} and distributional shift. While \textcolor{blue}{warm-up} phases can partially mitigate this, they fall short in preserving long-term stability. COOPO addresses this by leveraging \textcolor{green}{cyclic synergy} between offline re-training and online fine-tuning, repeatedly anchoring the policy to the offline dataset to enhance robustness and prevent forgetting.}
    \label{fig:coopo_curve}
    \vspace{-0.2in}
\end{figure}


\begin{table}[h]
\caption{Complexity Comparison}
\begin{center}
\begin{threeparttable}
\begin{tabular}{c c}
    \toprule
    \textbf{Method} & \textbf{Sample Complexity} \\ \midrule
      Online PPO   &  $\tilde{\mathcal{O}}(\frac{H^4\varsigma}{\varepsilon^2})$                     \\
      Offline AWAC   &  $\tilde{\mathcal{O}}(\frac{c_1^2}{\varepsilon^2})$                                   \\ 
      1-cycle Hybrid & $\tilde{\mathcal{O}}(\frac{c_1^2}{\varepsilon^2}+\frac{H^4\varsigma}{\varepsilon^2})$ \\
      \textbf{COOPO} & $\tilde{\mathcal{O}}(\frac{c_1^2}{\varepsilon^2}\text{log}\frac{1}{\varepsilon}+\frac{H^3\varsigma}{\varepsilon^2}\text{log}\frac{1}{\varepsilon})$\\
      \bottomrule
\end{tabular}
\begin{tablenotes}
$H$: the episode length, $\varsigma>0$: policy class param, $\varepsilon>0$: the accuracy.
$c_1>0$: a constant related to offline learning.
\end{tablenotes}
\end{threeparttable}
\end{center}
\label{table:complexity_com}
\vspace{-0.1in}
\end{table}

\section{Key Related Work}

\textbf{Online RL.} Online RL typically includes two categories, i.e., on-policy and off-policy algorithms. The key difference lies in how these methods update their policies. On-policy ones~\cite{schulman2017proximal} update policies using data collected by their current behavior policies, ignoring any data generated by history behavior policies and leading to high sample complexity. In contrast, off-policy methods~\cite{haarnoja2018soft} allow policies to learn from experience produced by prior policies, resulting in high sample efficiency. However, when real-world problems are complex, such as transportation~\cite{wang2021deep} and biological systems~\cite{neftci2019reinforcement}, the requirement of massive online interactions still makes online RL inapplicable to solving them.

\noindent\textbf{Offline RL.} Different from online RL, offline RL is inaccessible to any online environment and learns policies solely from pre-collected data. A central challenge in offline RL is value overestimation, such that pessimistic updating strategies have been developed to address the distribution shift problem~\cite{kumar2020conservative,kostrikov2021offline}. Model-free offline RL adopts behavior-constrained approaches to regularize the learned policy to stay close to the data collecting policy~\cite{wu2019behavior,fujimoto2019off,ghosh2022offline}. A recent work found that the inherent conservatism of some on-policy algorithms can help offline RL methods attenuate the overestimation and develop behavior PPO~\cite{zhuang2023behavior}.
When turning to model-based RL, existing methods apply a parameterized model to estimate states and then update policies in a pessimistic way~\cite{kidambi2020morel,yu2020mopo}. Though distributional shift is to some extent resolved, offline RL methods cannot always ensure decent performance, particularly when the data quality is poor, significantly underperforming online RL approaches. Additionally, evaluating learned policies is also challenging if the online environment is unavailable.

\noindent\textbf{Offline-to-Online RL:} The emerging offline-to-online RL has drawn considerable attention due to the synergy of advantages from offline and online RL. The mixed setting was introduced in~\cite{nair2020awac} to learn the policy offline first from the pre-collected dataset and then online via interactions with the environment. More recently, a calibrated Q-learning algorithm was shown to be effective for online fine-tuning~\cite{nakamoto2023cal}, leading to high sample efficiency. Different from value-based methods, \cite{lei2023uni} unified online and offline deep RL with multi-step on-policy optimization without introducing extra conservatism or regularization. They leveraged diverse ensemble policies to address the distributional shift issue. Other methods, such as adaptive policy learning framework~\cite{zheng2023adaptive}, policy-only RL fine-tuning~\cite{xiao2025efficient}, ensemble-based offline-to-online RL~\cite{zhao2023improving}, and policy re-evaluation and value alignment~\cite{luo2024optimistic}, have been researched to bridge the technical gaps in offline-to-online settings.

Another line of research does not require any offline pretraining, instead directly integrates offline datasets with the online training~\cite{song2022hybrid,ball2023efficient,huang2025augmenting}, which gives rise to questions about the efficacy of RL fine-tuning. However, a more recent work~\cite{zhou2024efficient} has proposed a simple yet efficient fine-tuning method without retention of offline data by having a warmup phase at the beginning. 
To further address exploration challenges in long-horizon and sparse-reward tasks, Q-chunking~\cite{li2025reinforcement} adopts action chunking by directly running RL in a ``chunked" action space instead of a single action to maximize the sample efficiency. However, the compounding error, along with a long horizon, may negatively affect the performance. Also, determining the action length depends highly on empirical tasks, requiring manual tuning efforts.

\section{Preliminaries}
\noindent\textbf{Markov Decision Process.}
This work considers a finite-horizon Markov Decision Process (MDP) with discounted reward defined by the tuple $\mathcal{M}=(\mathcal{S}, \mathcal{A}, \mathcal{P}, r, d_0, \gamma)$, where $\mathcal{S}$ and $\mathcal{A}$ represent the sets of states and actions respectively, $\mathcal{P}:\mathcal{S}\times\mathcal{A}\times\mathcal{S}\to [0,1]$ is the transition probability function, $r:\mathcal{S}\times\mathcal{A}\to\mathbb{R}$ is the reward function, $d_0$ is the initial state distribution of environment, and $\gamma\in[0,1]$ is the discount factor. A stochastic policy expressed by $\pi (a|s):\mathcal{S}\to\mathcal{A}$, defines a mapping from the state $s$ to a probability distribution over actions $a$. Denote by $h$ a specific time step and by $H$ the horizon length in a trajectory.
We then also define the stationary state distribution under the policy $\pi$ as $d^\pi(s)=(1-\gamma)\sum_{h=0}^H\gamma^hp(s_h=s)$, where $p$ signifies the probability.
Reinforcement learning aims to choose a policy that is able to maximize the expected discounted cumulative rewards $J(\pi)=\mathbb{E}_{\tau\sim\pi}[\sum_{h=0}^H\gamma^hr(s_h,a_h)]$, where $\tau\sim\pi$ signifies a trajectory sampled according to $s_0\sim d_0$, $a_h\sim\pi(\cdot|s_h)$, and $s_{h+1}\sim p(\cdot|s_h, a_h)$. 
Additionally, we define the state value function of the policy $\pi$ as $V^\pi(s)=\mathbb{E}_{\tau\sim\pi}[\sum_{h=0}^H\gamma^hr(s_h,a_h)|s_0=s]$, the state-action value function, i.e., $Q$-function, as $Q^\pi(s,a)=\mathbb{E}_{\tau\sim\pi}[\sum_{h=0}^H\gamma^tr(s_h,a_h)|s_0=s, a_0=a]$, and the critical advantage function as $A^\pi(s,a)=Q^\pi(s,a)-V^\pi(s)$. This intuitively assesses how much better it is by taking action $a$ than the average.

\noindent\textbf{Online RL.} We focus on on-policy policy optimization algorithms, which estimate policy gradients and apply first-order stochastic gradient ascent. PPO, one of the most popular choices, is favored for its strong performance, simplicity, and theoretical grounding via a policy improvement lower bound. It constrains policy updates using a clipping heuristic, leading to widely used PPO-Clip variant~\cite{huang2024ppo}. At each update, the following objective is optimized:
\begin{equation}\label{eq_1}
\begin{split}
    \mathcal{L}^{PPO}_{\text{actor}}(\theta)&=\mathbb{E}_{\pi_{\text{old}}}[\textnormal{min}(c_h(\theta)\hat{A}^{\pi_h}(s,a),\\&\text{clip}(c_h(\theta),1-\iota,1+\iota)\hat{A}^{\pi_h}(s,a))],
\end{split}
\end{equation}
where $c_h(\theta)=\frac{\pi_\theta(a_h|s_h)} {\pi_{\textnormal{old}}(a_h|s_h)}, \textnormal{clip}(o,q,v)=\textnormal{min}(\textnormal{max}(o,q),v)$. $\hat{A}^{\pi_h}(s,a)$ is an estimator of the advantage function at time step $h$. The clipping function in PPO ensures the probability ratio between current and new policies stays within $[1-\iota, 1+\iota]$, enforcing stable updates. guarantees a lower bound on the surrogate loss. In practice, using a small learning rate and many time steps helps PPO approximate this objective reliably. While PPO is used in COOPO, other trust-region methods (e.g., TRPO~\cite{schulman2015trust}) or off-policy algorithms like SAC and DDPG~\cite{lillicrap2015continuous} can also be used for online fine-tuning.

\noindent\textbf{Offline RL.} In offline RL, the agent can only access a pre-collected dataset with transitions $\mathcal{D}=\{(s_t,a_t,s_{j+1},r_j)_{j=1}^N\}$, generated by an unknown behavior policy $\pi_\beta$. Unlike online RL, offline RL aims to learn an optimal policy directly from a static dataset $\mathcal{D}$. A key challenge is the distributional shift between the learned policy $\pi$ and the behavior policy $\pi_\beta$. When $\pi$ chooses actions not well represented in $\mathcal{D}$, it can lead to inaccurate value estimates. To address this, a behavior constraint is commonly applied to keep $\pi$ close to $\pi_\beta$, typically in the following form:
\begin{equation}\label{eq_2}
    \text{max}_\pi \mathbb{E}_{s\sim\mathcal{D}}[\mathbb{E}_{\pi(a|s)}[Q(s,a)]], \; \text{s.t.} D_{KL}(\pi||\pi_\beta)\leq \delta,
\end{equation}
where $D_{KL}(\cdot, \cdot)$ is the Kullback-Leibler divergence, and $\delta>0$ is a threshold. To solve the above optimization problem, we adopt the Advantage Weighted Actor-Critic (AWAC) algorithm~\cite{nair2020awac}, which trains an off-policy critic and an actor under an implicit policy constraint. Following the actor-critic framework, it alternates between evaluating the policy via $Q^\pi$ and improving it by maximizing a weighted objective based on the estimated advantages. With abuse of notation, we denote by $Q^{\pi_t}(s,a)$ the estimated Q-function at time step $t$ (this is episode herein).
Since optimizing $Q^{\pi_t}(s,a)$ is equivalent to optimizing $A^{\pi_t}(s,a)$~\cite{nair2020awac}, Eq.~\ref{eq_3} can be rewritten as
\begin{equation}\label{eq_3}
\begin{split}
\pi_{t+1}&=\text{argmax}_\pi\mathbb{E}_{a\sim\pi(\cdot|s)}[A^{\pi_t}(s,a)]\\& \text{s.t.} D_{KL}(\pi(\cdot|s)||\pi_\beta(\cdot|s))\leq\delta
\end{split}
\end{equation}

To this end, a non-parametric analytical solution is obtained for the actor $\pi$, subsequently followed by projecting it into the parametric policy class. We first introduce the Lagrangian by augmenting the loss in Eq.~\ref{eq_3} with a multiplier $\lambda>0$ such that
\begin{equation}\label{eq_4}
    \mathcal{L}(\pi, \lambda) = \mathbb{E}_{a\sim\pi(\cdot|s)}[A^{\pi_t}(s,a)]+\lambda(\delta-D_{KL}(\pi(\cdot|s)||\pi_\beta(\cdot|s))).
\end{equation}
Pertaining to closed-form solutions in~\cite{peng2019advantage,nair2020awac}, we can attain $
    \pi^*(a|s)=\frac{1}{Z(s)}\pi_\beta(a|s)\text{exp}(\frac{1}{\lambda}A^{\pi_t}(s,a))$,
where $Z(s)=\int_a\pi(a|s)\text{exp}(\frac{1}{\lambda}A^{\pi_t}(s,a))$ is a partition function. We then parameterize the policy with a deep neural network $\theta\in\mathbb{R}^n$, and project the non-parametric solution into policy space. Mathematically, this is via minimizing the KL divergence between $\pi_\theta$ and $\pi^*$ such that
\[\text{argmin}_\theta\mathbb{E}_{s\sim\mathcal{D}}[D_{KL}(\pi^*(\cdot|s)||\pi_\theta(\cdot|s))].
\]
With some mathematical manipulations as in~\cite{peng2019advantage}, the parameter update at time step $t+1$ is expressed:
\begin{equation}\label{eq_7}
    \theta_{t+1}=\text{argmax}_{\theta}\mathbb{E}_{(s,a)\sim\mathcal{D}}[\text{log}\pi_\theta(a|s)\text{exp}(\frac{1}{\lambda}A^{\pi_t}(s,a))].
\end{equation}
Updating $\theta$ amounts to the maximum likelihood estimation where the labels are accessible by weighting the state-action pairs observed in the dataset by the predicted advantages. This can simply be done by sampling $(s,a)$ from $\mathcal{D}$. Though in our work, offline training is done through AWAC, other offline RL algorithms can alternatively be used to replace AWAC, possibly with some minor algorithmic adjustments. 

A key requirement in cyber-physical systems is that the closed-loop controller ~\cite{khalil2002nonlinear} evolves in a stable and predictable manner, avoiding large or abrupt changes that may violate safety constraints. 
In classical adaptive control ~\cite{astrom1995adaptive}, this requirement is typically enforced by bounding the incremental change of controller parameters or by maintaining a sufficient stability margin. 
In COOPO, an analogous stability mechanism naturally emerges from the KL-regularization imposed in the offline policy update.

Due to each offline iteration, COOPO restricts the divergence between the updated policy $\pi_{k+1}$ and the reference policy $\pi_k$ through a KL trust region:
\begin{equation}
\begin{split}
D_{KL}(\pi_{k+1}\,\Vert\,\pi_k) \le \delta.
\end{split}
\end{equation}

This constraint ensures that the new controller remains close to the previous one. Moreover, by Pinsker’s inequality~\cite{cover1999elements}, this directly bounds on the total-variation distance between the two policies:
\begin{equation}
\begin{split}
\|\pi_{k+1} - \pi_k\|_{\mathrm{TV}}
    \;\le\;
    \sqrt{\tfrac{1}{2} D_{KL}(\pi_{k+1}\Vert \pi_k)}
    \;\le\;
    \sqrt{\tfrac{\delta}{2}}.
\end{split}
\end{equation}
Thus, policy changes across offline–online cycles remain uniformly bounded, resulting in a smooth and safe evolution of the closed-loop behavior.


\section{Proposed Method}
\begin{figure}
    \centering
    \includegraphics[width=\linewidth]{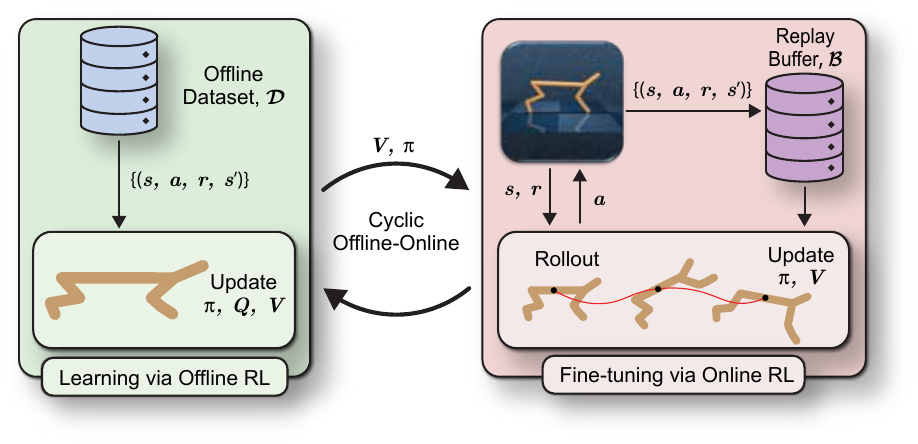}
    \caption{Schematic Diagram of COOPO: COOPO cyclically alternates KL-regularized offline training and trust-region online fine-tuning to eliminate distribution drift and catastrophic forgetting. When there is only one cycle, COOPO degenerates to vanilla offline-to-online RL.}
    \label{fig:coopo}
    \vspace{-0.1in}
\end{figure}
We start with the algorithmic framework in the sequel. Relevant notations will be defined alongside the analysis and the proofs are deferred to Appendix.

\noindent\textbf{COOPO.} Algorithm~\ref{alg:coopo} outlines the core steps of COOPO. While COOPO can theoretically adopt any RL algorithms, actor-critic methods are preferred in practice for smoother offline-to-online transitions. The actor and critic, initially trained offline, are fine-tuned online for performance gains. Unlike conventional approaches that stop after online fine-tuning risking distributional shift and catastrophic forgetting. COOPO cyclically returns to the offline dataset, anchoring the policy to supported states and preserving prior knowledge. As shown in Figure~\ref{fig:coopo}, each cycle includes:
\textbf{Offline Phase}: Optimize policies using KL-regularized advantage-weighted updates, minimizing distributional shift via a loss combining importance-weighted behavior cloning and KL penalties;
\textbf{Online Phase}: Fine-tune policies via any exploratory policy gradient method (e.g., PPO, TRPO, SAC), ensuring stable exploration with clipping mechanisms or divergence constraints.

\noindent\textbf{KL-regularized AWAC with value function.} We parameterize the Q-function and value-function with $\phi$ and $\psi$, initializing them from the models used in online fine-tuning. Lines 4–11 in Algorithm~\ref{alg:coopo} outline the offline updates, which largely follow AWAC, with a key difference: we introduce an additional value model (Line 8) to support a seamless transition to online learning. This is essential since our online phase uses PPO, which relies on a value function. This also enables us to calculate advantage $\hat{A}^{\pi_{\theta_e}^{\text{off}}}(s,a))$ in a slightly different way (Line 9) from that used in~\cite{nair2020awac}. Particularly, we also
incorporate an explicit KL divergence penalty $\mathbb{E}_{s\sim\mathcal{D}}[D_{KL}(\pi^{\text{off}}_\theta||\pi_{\theta_e}^{\text{off}})(\cdot|s)]$ to constrain the current policy $\pi_{\theta_e}^{\text{off}}$ and the next learned policy $\pi^{\text{off}}_\theta$ (Line 10), directly controlled by a coefficient $\lambda$. While Eq.~\ref{eq_7}, implicitly constrains the learned policy $\pi$ to stay close to the behavior policy $\pi_\beta$, KL-regularized AWAC explicitly enforces a trust region, ensuring updates remain near previous policies and dataset-supported actions. Unlike the implicit constraint in Eq.~\ref{eq_7} where $\lambda$ only scales advantages, explicit KL control reduces sensitivity to hyperparameters and better prevents distributional shift and catastrophic forgetting. Moreover, incorporating a value function $V^{\text{off}}_\psi$ helps avoid out-of-distribution (OOD) action estimates, similar to IQL, enabling more robust learning from static datasets. 


\noindent\textbf{Transition to online RL.} Line 12 shows the critical transition from offline to online settings, involving the value and policy models. Concretely, the last iterates of KL-regularized AWAC of $\psi$ and $\theta$ are the initialization of online RL, for which we resort to the widespread PPO algorithm (Lines 14-20). In Line 21, the cyclic models at the $k$-th cycle are updated by using fine-tuned value and policy models. As the parametric Q model is not fine-tuned in the online phase, the cyclic Q model is obtained by directly adopting the last iterate of Q model in the offline phase. 

\noindent\textbf{Non-trivial synergy in COOPO.} COOPO cyclically integrates KL-regularized AWAC and trust-region PPO to address limitations of naive hybrids. To maintain critic stability across cycles, COOPO freezes the offline-trained Q-function during the online PPO phase. We empirically observed that jointly updating the critic online increases variance and value drift due to distribution mismatch between offline and online states. Freezing the critic ensures a consistent advantage baseline and aligns with findings in prior offline-to-online RL literature, where maintaining a stable value estimator is essential for preventing policy collapse during online fine-tuning.

The KL constraint anchors the policy to the offline dataset, reducing distributional drift, while PPO’s clipped updates enable stable online fine-tuning. The KL constraint plays a dual role: (i) it bounds inter-cycle policy deviation, thereby ensuring stable improvement at each step, and (ii) it mitigates overfitting to the offline dataset by preventing aggressive policy updates that diverge from known-safe behaviors. This mechanism is particularly important in CPS settings, where abrupt controller changes can violate safety margins. Repeated offline phases progressively average advantage errors and stabilize training. This closed-loop alternation forms a feedback mechanism: \textit{each cycle resets policy divergence with KL penalties and refines policy using online signals}. This synergy lets AWAC stabilize PPO’s exploration and PPO guide AWAC updates. COOPO also amortizes offline data cost across cycles, thus improving efficiency. Theoretically, it guarantees performance bounds and logarithmic cycle complexity; empirically, it achieves strong sample efficiency on benchmarks, demonstrating its effective design.

To isolate the effect of the static offline dataset, online rollouts are intentionally excluded during the offline update. This design ensures that the offline update has enough contribution into the current policy by using the better data : when the online policy becomes locally trapped or drifts due to limited exploration, the offline step pulls the agent back toward high-quality behaviors supported by the dataset. By avoiding the accumulation of mixed with the online samples, COOPO prevents dataset degradation and maintains a stable reference distribution, which give a clearer attribution of performance gains to the cyclic update mechanism. 
Incorporating replayed online samples remains feasible and may further accelerate adaptation, which we leave for future work.

\noindent\textbf{Theoretical Analysis.}
To start with the analysis, we denote by $\pi_k:=\pi_{\theta_k}$ the policy at the start of cycle $k$. To ease the notation, we also denote by $\pi_{k+1/2}$ and $\pi_{k+1}$ the policies after the offline phase and online phase of cycle $k$, respectively. The state distribution in the offline dataset $\mathcal{D}$ is defined as $\rho_\beta(s)$. In both offline and online phases, the advantage function plays a central role in evaluating predicted actions, With abuse of notations, we still adopt $A^{\pi_k}(s,a)$ (or $\hat{A}^{\pi_k}(s,a)$) as the advantage (or estimated advantage) throughout the rest of paper. In a rigorous sense, the notations of advantage functions differ slightly as in Algorithm~\ref{alg:coopo}, but we unify them for the ease of analysis. Analogously, we also use the same value to upper bound advantages in these two phases, i.e., $\epsilon_k=\text{max}_{s,a}|A^{\pi}_k(s,a)|$. Throughout the analysis, $\mathcal{O}(\cdot)$ represents the standard big O notation for complexity. $\tilde{\mathcal{O}}(\cdot)$ notation simplifies $\mathcal{O}(\cdot)$ notation by ignoring logarithmic factors. 
In what follows, we present a key assumption on the distribution shift coefficient $C$, which helps characterize the bound.
\begin{assumption}\label{assumption_1}
    There exists a constant $1\leq C<\infty$ such that $C=\text{max}_s\frac{d^{\pi_k}(s)}{\rho_\beta(s)}$ if for all state $s$ with $d^{\pi_k}(s)>0$, $\rho_\beta(s)>0$.
\end{assumption}
Assumption~\ref{assumption_1} bounds the distance between the visitation distributions of the behavior policy $\pi_\beta$ and any learned policy $\pi_k$, capturing the data coverage to ensure their closeness, similar to single policy concentrability in prior work
~\cite{xie2021policy,rashidinejad2021bridging}.
In Algorithm~\ref{alg:coopo}, the offline learning algorithm is KL-regularized AWAC, instead of AWAC, which leads to Eq.~\ref{eq_4}. Hence, we will show under the KL-regularized AWAC, such a relationship still holds. We present an auxiliary technical lemma to reveal this.
\begin{lemma}\label{lemma_1}
    With KL-AWAC objective function
    \begin{equation}
    \begin{split}
        \pi_{new}&=\text{argmax}_\pi\mathbb{E}_{s,a\sim\mathcal{D}}[\text{log}\pi(a|s)\cdot w(s,a)]\\&-\lambda\mathbb{E}_{s\sim\mathcal{D}}[D_{KL}(\pi(\cdot|s)||\pi_{old}(\cdot|s)],
    \end{split}
    \end{equation}
    where $w(s,a)=\text{exp}\bigg(\frac{A^{\pi^{old}}(s,a)}{\lambda}\bigg)$ is the advantage weight, and $\lambda>0$ is a hyperparameter to control the regularization length, we have the following relationship
    \begin{equation}
        \mathbb{E}_{a\sim\pi^*(\cdot|s)}[A_{\pi_{old}}(s,a)]=\lambda D_{KL}(\pi^*||\pi_{old})(s)+\lambda \text{log}Z(s),
    \end{equation}
    where $Z(s)=\mathbb{E}_{a\sim\pi_{old}(\cdot|s)}\bigg[\text{exp}\bigg(\frac{A^{\pi_{old}}(s,a)}{\lambda}\bigg)\bigg]$.
\end{lemma}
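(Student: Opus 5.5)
The plan is to identify the maximizer $\pi^*$ in closed form, namely the exponentially tilted policy
\[
\pi^*(a|s)=\frac{1}{Z(s)}\pi_{old}(a|s)\exp\Big(\frac{A^{\pi_{old}}(s,a)}{\lambda}\Big),
\]
and then obtain the stated identity by direct substitution. I will read the lemma in its non-parametric, per-state form, exactly as the paper treats Eq.~\ref{eq_4}: for each $s$, $\pi^*(\cdot|s)$ maximizes the KL-regularized advantage objective
\[
\mathbb{E}_{a\sim\pi(\cdot|s)}[A^{\pi_{old}}(s,a)]-\lambda D_{KL}(\pi\|\pi_{old})(s)
\]
over distributions on $\mathcal{A}$. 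The weighted log-likelihood in the lemma is the projection of this $\pi^*$ onto the policy class, by the same manipulation used to pass from Eq.~\ref{eq_4} to Eq.~\ref{eq_7}. With $\pi_{old}$ as the KL reference in place of $\pi_\beta$, the weight $w(s,a)=\exp(A^{\pi_{old}}/\lambda)$ is exactly the Radon--Nikodym factor turning $\pi_{old}$ into $\pi^*$ up to normalization.

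The key steps are as follows.

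\textbf{Step 1 (derive $\pi^*$).} Fix $s$ and form the Lagrangian with a normalization multiplier $\mu(s)$:
\[
\int_a \pi(a|s)A^{\pi_{old}}(s,a)-\lambda\int_a\pi(a|s)\log\frac{\pi(a|s)}{\pi_{old}(a|s)}+\mu(s)\Big(1-\int_a\pi(a|s)\Big).
\]
Setting the functional derivative to zero gives $A^{\pi_{old}}-\lambda\log(\pi/\pi_{old})-\lambda-\mu=0$. Hence $\pi\propto\pi_{old}\exp(A^{\pi_{old}}/\lambda)$, and normalizing yields $Z(s)=\mathbb{E}_{a\sim\pi_{old}}[\exp(A^{\pi_{old}}/\lambda)]$. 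The objective is strictly concave in $\pi$, since $-\lambda D_{KL}$ is strictly concave and the first term is linear. So this stationary point is the unique maximizer. Equivalently, one can observe directly that the objective equals $\lambda\log Z(s)-\lambda D_{KL}(\pi\|\pi^*)(s)$.

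\textbf{Step 2 (substitute).} Take logs of the closed form:
\[
\log\frac{\pi^*(a|s)}{\pi_{old}(a|s)}=\frac{A^{\pi_{old}}(s,a)}{\lambda}-\log Z(s).
\]
Take the expectation over $a\sim\pi^*(\cdot|s)$. The left side becomes $D_{KL}(\pi^*\|\pi_{old})(s)$. Multiplying by $\lambda$ and rearranging gives
\[
\mathbb{E}_{a\sim\pi^*}[A_{\pi_{old}}(s,a)]=\lambda D_{KL}(\pi^*\|\pi_{old})(s)+\lambda\log Z(s),
\]
which is the claim.

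The main obstacle is not the algebra but justifying that the displayed parametric objective (weighted log-likelihood over $\mathcal{D}$ minus the KL penalty) actually has $\pi^*$ as its optimizer. Taken literally, that objective mixes a data-weighted MLE term with a KL term, and its maximizer is not exactly the tilted policy. I would handle this by stating explicitly that $\pi^*$ denotes the non-parametric solution of the KL-regularized advantage maximization, whose projection yields the weighted MLE as in~\cite{peng2019advantage,nair2020awac}. I would also note the technical conditions needed for the argument: $A^{\pi_{old}}$ bounded by $\epsilon_k$ so that $Z(s)$ is finite and positive, and absolute continuity of $\pi^*$ with respect to $\pi_{old}$, which is automatic from the tilted form. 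With these in place, Steps 1--2 go through per state.
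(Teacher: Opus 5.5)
Your proof is correct and has the same skeleton as the paper's: obtain the exponential tilt $\pi^*=\pi_{old}\exp(A^{\pi_{old}}/\lambda)/Z(s)$ from a stationarity condition with a normalization multiplier, then take logs, average under $\pi^*$, and rearrange. Your Step~2 is essentially the paper's computation.

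The real difference is in Step~1. The paper differentiates the literal objective, so its stationarity condition is $w(s,a)/\pi(a|s)-\lambda(\log\pi(a|s)-\log\pi_{old}(a|s)+1)+\nu(s)=0$. The tilt does not actually satisfy that equation: substituting it leaves $A^{\pi_{old}}(s,a)=Z(s)/\pi_{old}(a|s)+\text{const}(s)$, which is false in general. You instead differentiate the per-state objective $\mathbb{E}_{a\sim\pi}[A^{\pi_{old}}(s,a)]-\lambda D_{KL}(\pi\|\pi_{old})(s)$. Its stationary point genuinely is the tilt, and strict concavity makes it the unique maximizer. So your reinterpretation is what makes the lemma hold, and it is the reading the paper effectively uses afterwards. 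Your observation that this objective equals $\lambda\log Z(s)-\lambda D_{KL}(\pi\|\pi^*)(s)$ is the cleanest route of all: evaluating it at $\pi=\pi^*$ gives the claimed identity in one line.

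One side remark in your setup is not quite right. Projecting this $\pi^*$ via $\min_\theta\mathbb{E}_s[D_{KL}(\pi^*\|\pi_\theta)(s)]$ gives a weighted likelihood with $a\sim\pi_{old}$ and weight $w/Z(s)$, not $(s,a)\sim\mathcal{D}$. It also carries no extra KL penalty, so it does not reproduce the displayed objective unless $\pi_{old}=\pi_\beta$. This does not affect your proof, since you define $\pi^*$ by the non-parametric problem. You should present that as your interpretation of the lemma rather than justify it by the projection argument.
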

\begin{proof}
    The KL divergence term can expand to 
    \[D_{KL}(\pi(\cdot|s)||\pi_{old}(\cdot|s)=\mathbb{E}_{a\sim\pi(\cdot|s)}[\text{log}\pi(a|s)-\text{log}\pi_{old}(a|s)].\] Substituting it into the objective yields \begin{equation}
        \begin{split}\eta(\pi)&=\mathbb{E}_{s,a\sim\mathcal{D}}[\text{log}\pi(a|s)\cdot w(s,a)]\\&-\lambda\mathbb{E}_{a\sim\pi(\cdot|s)}[\text{log}\pi(a|s)-\text{log}\pi_{old}(a|s)].
        \end{split}
    \end{equation}

\begin{algorithm}[t]
\caption{COOPO}
\label{alg:coopo}
\small
\begin{algorithmic}[1]

\State \textbf{Input:} Offline dataset $\mathcal{D}=\{(s,a,s',r)_j\}$, cycle $K$, offline epochs $E$, online episodes $T$, KL coefficient $\lambda$, horizon $H$
\State \textbf{Initialize:} $\pi_{\theta_1}, Q_{\phi_1}, V_{\psi_1}$
\State \textbf{Output:} $\pi_{\theta_K}$

\For{$k=1,\ldots,K$}

\State $Q^{\mathrm{off}}_{\phi_1} \gets Q_{\phi_k}$,
$\pi^{\mathrm{off}}_{\theta_1} \gets \pi_{\theta_k}$,
$V^{\mathrm{off}}_{\psi_1} \gets V_{\psi_k}$

\Statex \textit{Offline learning}

\For{$e=1,\ldots,E$}

\State Sample batch $(s,a,s',r)\sim \mathcal{D}$

\State $y \gets r(s,a) + \gamma \mathbb{E}_{s',a'}
\left[Q^{\mathrm{off}}_{\phi_e}(s',a')\right]$

\State $\phi_{e+1} \gets
\arg\min_{\phi}
\mathbb{E}_{\mathcal{D}}
\left[
(Q^{\mathrm{off}}_{\phi}(s,a)-y)^2
\right]$

\State $\psi_{e+1} \gets
\arg\min_{\psi}
\mathbb{E}_{\mathcal{D}}
\left[
(V^{\mathrm{off}}_{\psi}(s)-y)^2
\right]$

\State $\hat A^{\pi^{\mathrm{off}}_{\theta_e}}(s,a)
\gets
Q^{\mathrm{off}}_{\phi}(s,a)
-
V^{\mathrm{off}}_{\psi}(s)$

\State $\theta_{e+1} \gets
\begin{aligned}[t]
\arg\max_{\theta} \;
&\mathbb{E}_{s,a\sim\mathcal{D}}
\left[
\log \pi^{\mathrm{off}}_{\theta}(a|s)
\exp\!\left(
\frac{1}{\lambda}
\hat A^{\pi^{\mathrm{off}}_{\theta_e}}(s,a)
\right)
\right] \\
&-
\lambda
\mathbb{E}_{s\sim\mathcal{D}}
\left[
D_{KL}
\big(
\pi^{\mathrm{off}}_{\theta}(\cdot|s)
\|
\pi^{\mathrm{off}}_{\theta_e}(\cdot|s)
\big)
\right]
\end{aligned}$

\EndFor

\State $V^{\mathrm{on}}_{\psi_1} \gets V^{\mathrm{off}}_{\psi_E}$,
$\pi^{\mathrm{on}}_{\theta_1} \gets \pi^{\mathrm{off}}_{\theta_E}$

\Statex \textit{Online fine-tuning}

\For{$t=1,\ldots,T$}

\State Collect trajectories $\mathcal{R}_t=\{\tau_i\}$ using $\pi^{\mathrm{on}}_{\theta_t}$

\State Compute reward-to-go $\hat R_h$ within horizon $H$

\State Compute advantages
$\hat A^{\pi^{\mathrm{on}}_{\theta_t}}_h$
using $V^{\mathrm{on}}_{\psi_t}$

\State Update policy parameter $\theta_{t+1}$ via Eq.~\eqref{eq_1}

\State $\psi_{t+1} \gets
\arg\min_{\psi}
\mathbb{E}_{\mathcal{R}_t}
\left[
\frac{1}{H}
\sum_{h=0}^{H}
(V^{\mathrm{on}}_{\psi}(s_h)-\hat R_h)^2
\right]$

\EndFor

\State $Q_{\phi_{k+1}} \gets Q^{\mathrm{off}}_{\phi_E}$

\State $V_{\psi_{k+1}} \gets V^{\mathrm{on}}_{\psi_T}$

\State $\pi_{\theta_{k+1}} \gets \pi^{\mathrm{on}}_{\theta_T}$

\EndFor

\end{algorithmic}
\end{algorithm}

We now solve the optimal policy $\pi^*$ by taking the functional derivative of $\eta(\pi)$ with respect to $\pi(\cdot|s)$ and set to zero:
\begin{equation}
\begin{split}
    \frac{d\eta}{d\pi}=\frac{w(s,a)}{\pi(s|a)}-\lambda(\text{log}\pi(a|s)-\text{log}\pi_{old}(a|s)+1)+\nu(s)=0,
\end{split}
\end{equation}
where $\nu(s)$ us a Lagrangian multiplier enforcing $\int\pi(a|s)da=1$. Solving for $\pi(a|s)$, we have
\begin{equation}
    \pi^*(a|s)\propto \pi_{old}(a|s)\text{exp}(\frac{A^{\pi_{old}}(s,a)}{\lambda})\text{exp}(-\frac{\nu(s)+\lambda}{\lambda}).
\end{equation}
Define the normalization constant as $Z(s)$: we then have
\begin{equation}
    \pi^*(a|s)=\frac{1}{Z(s)}\pi_{old}(a|s)\text{exp}\bigg(\frac{A^{\pi_{old}}(s,a)}{\lambda}\bigg),
\end{equation}
where $Z(s)=\mathbb{E}_{a\sim\pi_{old}(\cdot|s)}\bigg[\text{exp}\bigg(\frac{A^{\pi_{old}}(s,a)}{\lambda}\bigg)\bigg]$
This resembles the same formula in the analysis in~\cite{peng2019advantage}. 
We then take the $\text{log}$ on both sides of $\pi^*(a|s)$, obtaining
\begin{equation}
    \text{log}\pi^*(a|s)=\text{log}\pi_{old}(a|s)+\frac{A^{\pi_{old}}(s,a)}{\lambda}-\text{log}Z(s).
\end{equation}
Now we compute the KL divergence:
\begin{equation}
\begin{split}
    D_{KL}(\pi^*||\pi_{old})(s)&=\mathbb{E}_{a\sim\pi^*(\cdot|s)}[\text{log}\pi^*(a|s)-\text{log}\pi_{old}(a|s)]\\&=\mathbb{E}_{a\sim\pi^*(\cdot|s)}\bigg[\frac{A^{\pi_{old}}(s,a)}{\lambda}-\text{log}Z(s)\bigg].
\end{split}
\end{equation}
Rearranging to isolate the expected advantage completes the proof.
\end{proof}
We next state the lower bound for the difference between $\pi_{k}$ and $\pi_{k+1}$ in COOPO.
\begin{theorem}\label{theorem_1} (Performance Difference Bound)
    Let Assumption~\ref{assumption_1} hold. The advantage error in the offline phase is defined such that $\epsilon^\text{adv}_k=\text{max}_{s,a}|\hat{A}^{\pi_k}(s,a)-A^{\pi_k}(s,a)|$ for any policy $\pi_k$. Hence, for any cycle $k$, two consecutive policies $\pi_k, \pi_{k+1}$ produced by COOPO, satisfy the following:
    \begin{equation}\label{eq_8}
        J(\pi_{k+1})-J(\pi_k)\geq \frac{\lambda}{1-\gamma}G^{\text{off}}_k-\Lambda_k,
    \end{equation}
    where \[G^{\text{off}}_k=\mathbb{E}_{s\sim d^{\pi_k}}[D_{KL}(\pi_{k+1/2}||\pi_k)(s)],\] \[\Lambda_k=\frac{\epsilon^\text{adv}_k}{1-\gamma}+\mathbb{E}_{s\sim d^{\pi_k}}[\mathbb{E}_{a\sim\pi_{k+1/2}}[\frac{2\gamma\epsilon_k\zeta_k}{(1-\gamma)^2}]]+\frac{4\epsilon_k\gamma\alpha^2_k}{(1-\gamma)^2},\] $\zeta_k=\text{max}_sD_{TV}(\pi_{k+1/2}||\pi_k)(s)$, $\alpha_k=\text{max}_sD_{TV}(\pi_{k+1}||\pi_{k+1/2})(s)$. $D_{TV}(\cdot||\cdot)$ is the total variation distance. 
\end{theorem}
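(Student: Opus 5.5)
We split the cycle at the intermediate policy $\pi_{k+1/2}$ and write
\[
J(\pi_{k+1})-J(\pi_k)=\big[J(\pi_{k+1/2})-J(\pi_k)\big]+\big[J(\pi_{k+1})-J(\pi_{k+1/2})\big].
\]
We then bound each bracket with the performance difference lemma combined with a state-distribution mismatch correction, as in TRPO/CPI-type analyses.

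\textbf{Offline bracket.} The performance difference lemma gives
\[
J(\pi_{k+1/2})-J(\pi_k)=\frac{1}{1-\gamma}\mathbb{E}_{s\sim d^{\pi_{k+1/2}}}\mathbb{E}_{a\sim\pi_{k+1/2}}[A^{\pi_k}(s,a)].
\]
We replace $d^{\pi_{k+1/2}}$ by $d^{\pi_k}$ using the standard surrogate bound: since $\max_{s,a}|A^{\pi_k}|\le\epsilon_k$ and $\|d^{\pi_{k+1/2}}-d^{\pi_k}\|_1\le\frac{2\gamma}{1-\gamma}\zeta_k$, the replacement costs at most $\frac{2\gamma\epsilon_k\zeta_k}{(1-\gamma)^2}$. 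This is the second term of $\Lambda_k$; it is written inside an expectation, but since it is constant that expectation is harmless.

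Next, the offline update is computed with $\hat A^{\pi_k}$ rather than $A^{\pi_k}$. Lemma~\ref{lemma_1} applied with $\hat A$ gives, pointwise in $s$,
\[
\mathbb{E}_{a\sim\pi_{k+1/2}}[\hat A^{\pi_k}(s,a)]=\lambda D_{KL}(\pi_{k+1/2}\|\pi_k)(s)+\lambda\log Z(s).
\]
Swapping $\hat A$ for $A$ costs at most $\epsilon^{\text{adv}}_k$ per state, which gives the term $\frac{\epsilon^{\text{adv}}_k}{1-\gamma}$. It remains to show $\log Z(s)\ge 0$. By Jensen,
\[
\log Z(s)\ge \tfrac{1}{\lambda}\mathbb{E}_{a\sim\pi_k}[\hat A^{\pi_k}(s,a)],
\]
and the right-hand side is $\approx 0$ because advantages have zero mean under their own policy. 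Any residual from this zero-mean property is absorbed into $\epsilon^{\text{adv}}_k$, and that absorption must be stated explicitly. Averaging over $s\sim d^{\pi_k}$ and dividing by $1-\gamma$ then yields $\frac{\lambda}{1-\gamma}G^{\text{off}}_k$.

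\textbf{Online bracket.} We need $J(\pi_{k+1})-J(\pi_{k+1/2})\ge -\frac{4\epsilon_k\gamma\alpha_k^2}{(1-\gamma)^2}$. We use the TRPO lower bound (Schulman et al.)
\[
J(\pi_{k+1})\ge J(\pi_{k+1/2})+\frac{1}{1-\gamma}L_{\pi_{k+1/2}}(\pi_{k+1})-\frac{4\epsilon\gamma\alpha_k^2}{(1-\gamma)^2},
\]
where $L$ is the surrogate advantage and $\alpha_k$ is the maximal TV distance between the two policies. The surrogate term is nonnegative, since the PPO/trust-region step is taken to not decrease the clipped surrogate and $\pi_{k+1/2}$ itself attains zero. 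We identify the advantage sup-norm in this bound with $\epsilon_k$ by the paper's stated convention that one bound $\epsilon_k$ serves both phases. Assumption~\ref{assumption_1} enters here to justify that advantage estimates fitted on $\rho_\beta$ remain controlled under $d^{\pi_k}$, transferring errors with a factor of at most $C$, which we fold into $\epsilon^{\text{adv}}_k$. Adding the two brackets gives \eqref{eq_8}.

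\textbf{Main obstacle.} The delicate step is the offline bracket. Lemma~\ref{lemma_1} holds for the exact non-parametric maximizer, whereas $\pi_{k+1/2}$ is a parametric projection, and the sign of $\log Z(s)$ is not automatic. We will first try to argue that $\mathbb{E}_{\pi_k}[\hat A^{\pi_k}]=0$ holds by construction, with $\hat A=Q_\phi-V_\psi$ and $V_\psi$ fit as the mean of $Q$, so Jensen gives $\log Z\ge 0$. Failing that, we will treat projection error and zero-mean violation as an additional additive slack absorbed into $\epsilon^{\text{adv}}_k$.
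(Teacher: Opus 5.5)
Your route is the paper's: split at $\pi_{k+1/2}$, apply Lemma~\ref{lemma_1} to $\hat A^{\pi_k}$, use the performance difference lemma with a CPO-type state-distribution correction, and use TRPO's Theorem~1 for the online phase. Your $\ell_1$ bound on $d^{\pi_{k+1/2}}-d^{\pi_k}$ is what underlies Corollary~1 of~\cite{achiam2017constrained}, which the paper cites.

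The step that does not go through is $\log Z(s)\ge 0$, and neither of your repairs gives the stated inequality.
\begin{itemize}
\item \emph{Zero mean by construction is not available.} The theorem only bounds $|\hat A^{\pi_k}-A^{\pi_k}|$ in sup norm. Moreover, in Algorithm~\ref{alg:coopo}, $V^{\text{off}}_\psi$ is regressed onto the target $y$ at dataset actions. It therefore approximates a $\pi_\beta$-average of $Q^{\text{off}}_\phi$, not a $\pi_k$-average, so $\mathbb{E}_{a\sim\pi_k}[\hat A^{\pi_k}(s,a)]$ has no definite sign.
\item \emph{Absorbing the residual costs a factor of two.} It gives only $\lambda\log Z(s)\ge\mathbb{E}_{a\sim\pi_k}[(\hat A^{\pi_k}-A^{\pi_k})(s,a)]\ge-\epsilon^{\text{adv}}_k$. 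Combined with the $\hat A\to A$ swap under $\pi_{k+1/2}$, this puts $\frac{2\epsilon^{\text{adv}}_k}{1-\gamma}$ into $\Lambda_k$, not $\frac{\epsilon^{\text{adv}}_k}{1-\gamma}$. Stating the absorption explicitly therefore means proving a different bound.
\end{itemize}

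This is not looseness in your argument: the inequality as written is false. Take any MDP with $r\equiv 0$ and $\mathcal{A}=\{a_1,a_2\}$, with full dataset coverage. Then $J$ is constant, $A^{\pi}\equiv 0$, $\epsilon_k=0$, and $\Lambda_k=\frac{\epsilon^{\text{adv}}_k}{1-\gamma}$. At every state set $\pi_k=(1-e^{-30},e^{-30})$, $\hat A^{\pi_k}=(-\epsilon,\epsilon)$ (so $\epsilon^{\text{adv}}_k=\epsilon$), and $\lambda=\epsilon/20$. The exponential tilt $\pi_{k+1/2}$ of Lemma~\ref{lemma_1} then has these properties:
\begin{itemize}
\item it puts mass $\approx 1-e^{-10}$ on $a_2$;
\item $\lambda\log Z(s)\approx-\epsilon/2$;
\item $\lambda D_{KL}(\pi_{k+1/2}\|\pi_k)(s)\approx 3\epsilon/2$.
\end{itemize}
So the right-hand side is $\approx\frac{\epsilon}{2(1-\gamma)}>0$, while the left-hand side is $0$. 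The paper's proof has exactly this hole: it asserts $\log Z(s)\ge 0$ ``because $\exp(\cdot)\ge 0$ and by Jensen''. Your diagnosis is sharper than the original.

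Two things are provable. One is the stated bound under the extra hypothesis $\mathbb{E}_{a\sim\pi_k}[\hat A^{\pi_k}(s,a)]\ge 0$. The other, with $e_k=\hat A^{\pi_k}-A^{\pi_k}$ and expectations over $a$ at fixed $s$, is
\begin{align*}
\mathbb{E}_{\pi_{k+1/2}}[A^{\pi_k}]&=\lambda D_{KL}(\pi_{k+1/2}\|\pi_k)(s)+\lambda\log Z(s)-\mathbb{E}_{\pi_{k+1/2}}[e_k]\\
&\ge\lambda D_{KL}(\pi_{k+1/2}\|\pi_k)(s)+\mathbb{E}_{\pi_k}[e_k]-\mathbb{E}_{\pi_{k+1/2}}[e_k]\\
&\ge\lambda D_{KL}(\pi_{k+1/2}\|\pi_k)(s)-2\zeta_k\epsilon^{\text{adv}}_k.
\end{align*}
This replaces $\frac{\epsilon^{\text{adv}}_k}{1-\gamma}$ by $\frac{2\zeta_k\epsilon^{\text{adv}}_k}{1-\gamma}$, which agrees with the statement only when $\zeta_k\le\frac12$.

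Two smaller points.
\begin{itemize}
\item Assumption~\ref{assumption_1} plays no role in this theorem; the paper uses it only in Theorem~\ref{theorem_2}. Since $\epsilon^{\text{adv}}_k$ is already a maximum over all $(s,a)$, there is nothing to transfer, and folding a factor $C$ into it would again change the statement.
\item In the online bracket, your $\alpha_k^2$ is what TRPO gives and what the statement has; the paper's Eq.~\ref{eq_16} writes it linearly. Making the surrogate's nonnegativity explicit also improves on the paper, which drops it silently. However, with $T$ PPO iterations, nonnegativity of $L_{\pi_{k+1/2}}(\pi_{k+1})$ is an assumption on the whole online phase. It does not follow from each step not decreasing its own clipped surrogate.
\end{itemize}
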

\begin{proof}
    We divide the proof into two parts: offline phase (KL-reguarlized AWAC) and offline phase (PPO). Recalling Lemma~\ref{lemma_1}, we obtain the following relationship:
    \begin{equation}
    \begin{split}
        &\mathbb{E}_{s\sim d^{\pi_k}}[\mathbb{E}_{a\sim\pi_{k+1/2}}[\hat{A}^{\pi_k}(s,a)]]\\&=\mathbb{E}_{s\sim d^{\pi_k}}[\lambda\cdot D_{KL}(\pi_{k+1/2}||\pi_k)(s)+\lambda \text{log}Z(s)],
    \end{split}
    \end{equation}
    where $Z(s)=\mathbb{E}_{a\sim\pi_k}[\text{exp}(\frac{1}{\lambda}\hat{A}^{\pi_k}(s,a))]$. Now we relate this to the true advantage such that 
    \begin{equation}
        \begin{split}
            &\mathbb{E}_{s\sim d^{\pi_k}}[\mathbb{E}_{a\sim\pi_{k+1/2}}[A^{\pi_k}(s,a)]] =  \mathbb{E}_{s\sim d^{\pi_k}}[\mathbb{E}_{a\sim\pi_{k+1/2}}[\hat{A}^{\pi_k}(s,a)]] \\&-\mathbb{E}_{s\sim d^{\pi_k}}[\mathbb{E}_{a\sim\pi_{k+1/2}}[\hat{A}^{\pi_k}(s,a)-A^{\pi_k}(s,a)]].
        \end{split}
    \end{equation}
    The second term of the last equality can be bounded by $\epsilon_k^{adv}$ such that
    \begin{equation}\label{eq_11}
    \begin{split}
        &\mathbb{E}_{s\sim d^{\pi_k}}[\mathbb{E}_{a\sim\pi_{k+1/2}}[A^{\pi_k}(s,a)]]\\&\geq \mathbb{E}_{s\sim d^{\pi_k}}[\lambda\cdot D_{KL}(\pi_{k+1/2}||\pi_k)(s)+\lambda \text{log}Z(s)]-\epsilon_k^{adv}.
    \end{split}
    \end{equation}
    Note that $\text{log}Z(s)\geq 0$ because $\text{exp}(\frac{1}{\lambda}\hat{A}^{\pi_k}(s,a))\geq 0$ and by Jensen's inequality. To construct the performance improvement, we leverage a well-known Performance Difference Lemma~\cite{kakade2002approximately} as follows:
    \begin{equation}
        J(\pi_{k+1/2})-J(\pi_k)=\frac{1}{1-\gamma}\mathbb{E}_{s\sim d^{\pi_{k+1/2}}}[\mathbb{E}_{a\sim \pi_{k+1/2}}[A^{\pi_k}(s,a)]].
    \end{equation}
    But this expectation is under $d^{\pi_{k+1/2}}$ instead of $d^{\pi_{k}}$. We then connect these distributions using the following inequality from Corollary 1 in~\cite{achiam2017constrained}
    \begin{equation}
        \begin{split}
            &J(\pi')-J(\pi)\geq \mathbb{E}_{s\sim d^{\pi}}[\mathbb{E}_{a\sim\pi'(\cdot|s)}[\frac{1}{1-\gamma}A^\pi(s,a)\\&-\frac{2\gamma\epsilon^{\pi'}}{(1-\gamma)^2}\text{max}_sD_{TV}(\pi'||\pi)(s)]].
        \end{split}
    \end{equation}
    Setting $\pi'=\pi_{k+1/2}$ and $\pi=\pi_k$ yields the following relationship:
    \begin{equation}\label{eq_14}
        \begin{split}
            &J(\pi_{k+1/2})-J(\pi_k)\geq \mathbb{E}_{s\sim d^{\pi_k}}[\mathbb{E}_{a\sim\pi_{k+1/2}}[\frac{1}{1-\gamma}A^{\pi_k}(s,a)\\&-\frac{2\gamma\epsilon_k\zeta_k}{(1-\gamma)^2}]].
        \end{split}
    \end{equation}
    Substituting Eq.~\ref{eq_11} into Eq.~\ref{eq_14}, we have
    \begin{equation}\label{eq_15}
        \begin{split}
            &J(\pi_{k+1/2})-J(\pi_k)\geq \frac{1}{1-\gamma}(\lambda\mathbb{E}_{s\sim d^{\pi_k}}[D_{KL}(\pi_{k+1/2}||\pi_k)(s)]\\&-\epsilon_k^{adv}) -\mathbb{E}_{s\sim d^{\pi_k}}[\mathbb{E}_{a\sim\pi_{k+1/2}}[\frac{2\gamma\epsilon_k\zeta_k}{(1-\gamma)^2}]],
        \end{split}
    \end{equation}
    which is the bound after the offline phase. We next proceed for the online phase, where we use PPO to update $\pi_{k+1/2}$ to $\pi_{k+1}$. The PPO update is designed to ensure a trust region. Following Theorem 1 in~\cite{schulman2015trust} and Proposition 1 in~\cite{achiam2017constrained}, we obtain the following relationship
    \begin{equation}\label{eq_16}
    \begin{split}
        &J(\pi_{k+1})-J(\pi_{k+1/2})\geq \\&-\frac{4\epsilon_k\gamma}{(1-\gamma)^2}\text{max}_sD_{TV}(\pi_{k+1}||\pi_{k+1/2})(s).
    \end{split}
    \end{equation}
    Combining Eqs.~\ref{eq_15} and~\ref{eq_16} obtains the desirable result.
\end{proof}
Theorem~\ref{theorem_1} suggests that the error term $\Lambda_k$ is primarily dictated by the advantage estimation error, the total variation distance between $\pi_k$ and $\pi_{k+1/2}$, and the total variation distance between $\pi_{k+1/2}$ and $\pi_{k+1}$. The first one can be connected with KL divergence as it is in the offline phase with the exact constraint in Eq.~\ref{eq_3} by using the Pinsker's inequality~\cite{fedotov2003refinements}. However, the second one cannot be connected to KL divergence, requiring an assumption to upper bound it. In this context, we follow the practice from~\cite{schulman2015trust} such that we use a constant $\alpha>0$.
Eq.~\ref{eq_8} implies the policy performance difference between two consecutive steps is attributed to differences in both offline and online phases. To further quantify the cycle complexity for $K$, we assume that the advantage estimation error $\epsilon^\text{adv}_k$
is bounded by $\bar{\epsilon}^\text{adv}>0$ and that the maximum advantage value $\epsilon_k$ is bounded by $\bar{\epsilon}>0$. We also define the \textit{suboptimality} as $\Delta_k=J(\pi^*)-J(\pi_k)$, where $\pi^*$ is the optimal policy. In Eq.~\ref{eq_8}, we observe the positive gain from offline phase, $G^{\text{off}}_k$, which is assumed to be at least propositional to the current suboptimality: $G^\text{off}_k\geq \kappa \Delta_k$, where $0<\kappa\leq \frac{1-\gamma}{\lambda}$. We briefly justify why this is a reasonable assumption in this work. When the policy is far from optimal (large $\Delta_k$), the offline phase should yield significant improvements (large $G^\text{off}_k$. Conversely, when the policy is near optimal (small $\Delta_k$), improvements are harder to come by and thus $G^\text{off}_k$ is small. Hence, $G^\text{off}_k$ scales with $\Delta_k$. It holds empirically when the dataset $\mathcal{D}$ covers states with positive advantages relative to $\pi_k$. While sparse rewards or poor coverage can violate this, optimistic exploration during online phases may likely restore validity by adding diverse rollouts to $\mathcal{D}$ and guiding it to include high-advantage transitions.
With these in hand, the cycle complexity is stated in the following main result.
\begin{theorem}\label{theorem_2} (Cycle Complexity)
    With definitions in Theorem~\ref{theorem_1}, and $\epsilon^\text{adv}_k\leq\bar{\epsilon}^{\text{adv}}, \epsilon_k\leq \bar{\epsilon}, \alpha_k\leq\alpha, G^\text{off}_k\geq \kappa\Delta_k$, COOPO converges to the neighborhood of $\pi^*$ in a linear convergence rate, i.e., $
        \Delta_K\leq (1-\frac{\lambda\kappa}{1-\gamma})^K\Delta_0+\frac{\Lambda(1-\gamma)}{\lambda\kappa},
$
    where $\Lambda=\frac{\bar{\epsilon}^\text{adv}}{1-\gamma}+\frac{C\gamma\sqrt{2\delta}\bar{\epsilon}}{(1-\gamma)^2}+\frac{4\bar{\epsilon}\gamma\alpha^2}{(1-\gamma)^2}$.
\end{theorem}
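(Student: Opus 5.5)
The plan is to turn the per-cycle bound of Theorem~\ref{theorem_1} into a contraction on the suboptimality $\Delta_k=J(\pi^*)-J(\pi_k)$ and then unroll it.

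\textbf{Step 1: write the bound as a recursion.} Since $J(\pi_{k+1})-J(\pi_k)=\Delta_k-\Delta_{k+1}$, Eq.~\ref{eq_8} gives
\[
\Delta_{k+1}\le \Delta_k-\frac{\lambda}{1-\gamma}G^{\text{off}}_k+\Lambda_k .
\]
Inserting the assumption $G^{\text{off}}_k\ge\kappa\Delta_k$ then yields
\[
\Delta_{k+1}\le \Big(1-\frac{\lambda\kappa}{1-\gamma}\Big)\Delta_k+\Lambda_k .
\]
The restriction $0<\kappa\le\frac{1-\gamma}{\lambda}$ guarantees that the contraction factor $\rho:=1-\frac{\lambda\kappa}{1-\gamma}$ lies in $[0,1)$.

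\textbf{Step 2: bound $\Lambda_k$ uniformly by $\Lambda$.} The first and third terms of $\Lambda_k$ are immediate from $\epsilon^{\text{adv}}_k\le\bar\epsilon^{\text{adv}}$, $\epsilon_k\le\bar\epsilon$ and $\alpha_k\le\alpha$.

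The middle term has integrand $\frac{2\gamma\epsilon_k\zeta_k}{(1-\gamma)^2}$, which does not depend on $(s,a)$, so the expectation collapses to the integrand itself. It remains to control $\zeta_k=\max_s D_{TV}(\pi_{k+1/2}\|\pi_k)(s)$. The offline KL trust region gives $D_{KL}(\pi_{k+1/2}\|\pi_k)\le\delta$, and Pinsker's inequality then gives $\zeta_k\le\sqrt{\delta/2}$. This produces $\frac{2\gamma\bar\epsilon\sqrt{\delta/2}}{(1-\gamma)^2}=\frac{\gamma\sqrt{2\delta}\,\bar\epsilon}{(1-\gamma)^2}$.

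The factor $C$ in the stated $\Lambda$ comes from the data distribution. The KL constraint is enforced in expectation under $\rho_\beta$ (states drawn from $\mathcal D$), while the bound is needed under $d^{\pi_k}$. Assumption~\ref{assumption_1} gives $\mathbb E_{d^{\pi_k}}[f]\le C\,\mathbb E_{\rho_\beta}[f]$ for nonnegative $f$. I would apply this change of measure to the KL/TV quantity and then use $C\ge1$, which lets the resulting term be written as $\frac{C\gamma\sqrt{2\delta}\bar\epsilon}{(1-\gamma)^2}$. Either route gives $\Lambda_k\le\Lambda$.

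\textbf{Step 3: unroll.} With $\Delta_{k+1}\le\rho\Delta_k+\Lambda$, induction gives
\[
\Delta_K\le\rho^K\Delta_0+\Lambda\sum_{j=0}^{K-1}\rho^j\le\rho^K\Delta_0+\frac{\Lambda}{1-\rho}=\rho^K\Delta_0+\frac{\Lambda(1-\gamma)}{\lambda\kappa},
\]
which is the stated linear rate plus neighborhood.

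\textbf{Main obstacle.} The delicate step is the middle term of Step~2. The trust region is only an expected (dataset-averaged) KL constraint, whereas $\zeta_k$ is a maximum over states. Passing from average to max, or from $\rho_\beta$ to $d^{\pi_k}$, is exactly where $C$ must be inserted. I would first try to work with the per-state KL constraint as written in Eq.~\ref{eq_3} and the stability discussion, and fall back on the coverage change of measure if only the expected form is available.
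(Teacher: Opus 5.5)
Your proof is correct under the same per-state trust region the paper uses. It has the same skeleton as the paper: the recursion from Eq.~\ref{eq_8} combined with $G^{\text{off}}_k\ge\kappa\Delta_k$, a uniform bound $\Lambda_k\le\Lambda$, and geometric unrolling.

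The one real difference is the middle term of $\Lambda_k$. The paper first changes measure via Assumption~\ref{assumption_1}, $\mathbb{E}_{d^{\pi_k}}[f]\le C\,\mathbb{E}_{\rho_{\mathcal D}}[f]$, and then applies Pinsker with the KL budget $\delta$; that is where its factor $C$ comes from. You instead observe that $\epsilon_k$ and $\zeta_k$ are constants, so the expectation is vacuous. You then obtain $\frac{\gamma\sqrt{2\delta}\,\bar\epsilon}{(1-\gamma)^2}$ directly from a per-state bound $D_{KL}(\pi_{k+1/2}\|\pi_k)(s)\le\delta$ and Pinsker, using $C\ge 1$ only to match the stated $\Lambda$. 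This is cleaner, and it shows that with $\zeta_k$ defined as a maximum over states, the coverage constant is not actually needed for this theorem.

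Both routes rest on the same per-state trust region, which the paper asserts rather than derives. Eq.~\ref{eq_3} bounds the KL to $\pi_\beta$, not to $\pi_k$, and Algorithm~\ref{alg:coopo} uses a KL penalty rather than a hard constraint. Your explicit check that $1-\frac{\lambda\kappa}{1-\gamma}\in[0,1)$, which the unrolling needs, is left implicit in the paper.

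One correction to your ``main obstacle'' paragraph. A density-ratio bound such as $C$ converts one expectation into another. It cannot bound $\max_s D_{TV}(\pi_{k+1/2}\|\pi_k)(s)$ by an average under $\rho_\beta$. So your fallback, which assumes only a dataset-averaged KL constraint, fails with $\zeta_k$ as defined. It works only if the TV term is kept state-wise inside the $d^{\pi_k}$-expectation, as in the constrained-policy-optimization bound invoked in the proof of Theorem~\ref{theorem_1}. In that case, writing $D_{KL}(s)$ for $D_{KL}(\pi_{k+1/2}\|\pi_k)(s)$, you change measure to $\rho_\beta$ and then use Jensen:
$\mathbb{E}_{\rho_\beta}\big[\sqrt{D_{KL}(s)/2}\big]\le\sqrt{\mathbb{E}_{\rho_\beta}[D_{KL}(s)]/2}\le\sqrt{\delta/2}$.
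This yields the stated constant, and there the factor $C$ is genuinely earned.
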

\begin{proof}
    Based on Eq.~\ref{eq_8}, we have
    \begin{equation}
        J(\pi^*)-J(\pi_{k+1})\leq J(\pi^*)-J(\pi_{k}) -\frac{\lambda\kappa}{1-\gamma}\Delta_k+\Lambda_k.
    \end{equation}
    The last inequality is due to the assumption $G^\text{off}_k\geq \kappa\Delta_k$. For $\Lambda_k$, based on the conditions, it is immediately obtained that \[\frac{\bar{\epsilon}^\text{adv}_k}{1-\gamma}\leq\frac{\bar{\epsilon}^\text{adv}}{1-\gamma}, \frac{4\bar{\epsilon}_k\alpha^2_k\gamma}{(1-\gamma)^2}\leq\frac{4\bar{\epsilon}\alpha^2\gamma}{(1-\gamma)^2}.\] For the second term in $\Lambda_k$, we would like to bound it with the KL divergence constraint in Eq.~\ref{eq_3}. However, the expectation is over the distribution $d^{\pi_k}(s)$, instead of the dataset distribution $\rho_\mathcal{D}(s)$ as in the offline objective. Based on Assumption~\ref{assumption_1}, we know that, for any function $f(s)\geq 0$
    \begin{equation}
    \begin{split}
        &\mathbb{E}_{s\sim d^{\pi_k}}[f(s)] = \int d^{\pi_k}(s)f(s)ds \\&=  \int \rho_\mathcal{D}(s)\frac{d^{\pi_k}(s)}{\rho_\mathcal{D}(s)}f(s)ds \leq C\int \rho_\mathcal{D}(s)f(s)ds.
    \end{split}
    \end{equation}
    Thus, $\mathbb{E}_{s\sim d^{\pi_k}}[f(s)]\leq C\mathbb{E}_{s\sim \rho_\mathcal{D}}[f(s)]$.
    This results in
    \begin{equation}
    \begin{split}
        &\mathbb{E}_{s\sim d^{\pi_k}}[\mathbb{E}_{a\sim\pi_{k+1/2}}[\frac{2\gamma\epsilon_k\zeta_k}{(1-\gamma)^2}]]\leq \\&\leq C\mathbb{E}_{s\sim \rho_\mathcal{D}}[\mathbb{E}_{a\sim\pi_{k+1/2}}[\frac{2\gamma\epsilon_k\zeta_k}{(1-\gamma)^2}]].
    \end{split}
    \end{equation}
    Pinsker's inequality leads to \[\zeta_k\leq\sqrt{\frac{D_{KL}(\pi_{k+1/2}||\pi_k)(s)}{2}},\] which attains
    \begin{equation}
        \mathbb{E}_{s\sim d^{\pi_k}}[\mathbb{E}_{a\sim\pi_{k+1/2}}[\frac{2\gamma\epsilon_k\zeta_k}{(1-\gamma)^2}]]\leq\frac{\sqrt{2\delta}C\gamma\bar{\epsilon}}{(1-\gamma)^2}.
    \end{equation}
    Hence, the following relationship holds
    \begin{equation}
        \Delta_{k+1}\leq (1-\frac{\lambda\kappa}{1-\gamma})\Delta_k+\Lambda.
    \end{equation}
    Applying the last inequality iteratively from $k=0,...,K$ completes the proof.
\end{proof}
Theorem~\ref{theorem_2} implies the linear convergence to the neighborhood of $\pi^*$, which typically appears in the gradient-based optimization with smooth and strongly convex objectives. However, in this work, we do not have such an assumption for the objective. 
Suppose that the desirable accuracy is $\varepsilon$ for COOPO, i.e., $\Delta_K\leq \varepsilon$. Using the conclusion from Theorem~\ref{theorem_2},
it is obtained that $(1-\frac{\lambda\kappa}{1-\gamma})^K\Delta_0\leq \frac{\varepsilon}{2}$ and $\frac{\Lambda(1-\gamma)}{\lambda\kappa}\leq \frac{\varepsilon}{2}$, 
which is equivalent to $\Lambda\leq\frac{\varepsilon\lambda\kappa}{2(1-\gamma)}$.
Therefore, we have $K\geq \frac{\text{log}(2\Delta_0/\varepsilon)}{\text{log}(1/(1-\frac{\kappa\lambda}{1-\gamma})}\approx \frac{1-\gamma}{\kappa\lambda}\text{log}(\frac{2\Delta_0}{\varepsilon})$. This shows the cycle complexity is \textit{logarithmic} in $\frac{1}{\varepsilon}$, which is favored in general. To ensure the relationship $\Lambda\leq\frac{\varepsilon\lambda\kappa}{2(1-\gamma)}$ and the per-cycle gain $\frac{\kappa\lambda}{1-\gamma}$ to be positive and bounded away from zero, practically speaking, if the offline dataset is good (high coverage leads to $C\to 1$) and the online phase is stable, these two are expected to hold.

So far, we have quantified the cycle complexity $K$ of COOPO’s cyclic framework, but the total sample complexity dependent on both offline and online phases remains to be determined. We now present a result that characterizes COOPO’s key sample complexity. In each cycle, the offline phase involves training for $E$ epochs on a fixed dataset $\mathcal{D}$ of size $|\mathcal{D}|$, resulting in $E|\mathcal{D}|$ offline samples per cycle and a total of $KE|\mathcal{D}|$ offline samples over $K$ cycles. Since the same dataset is reused, the key question is how to set $E$. Offline training must be sufficiently accurate to keep error terms in the performance bound under control, particularly the advantage estimation error $\bar{\epsilon}^\text{adv}$, which which must be small enough to ensure $\Lambda$ to be $\mathcal{O}(\varepsilon)$.
From standard supervised learning convergence, if we use first-order stochastic optimization method (which we have used in COOPO), the error after $E$ epochs satisfies the relationship~\cite{garrigos2023handbook}: $\bar{\epsilon}^\text{adv}=\mathcal{O}(1/\sqrt{E|\mathcal{D}|})$, from which we can obtain the offline samples per cycle.  
In each cycle, the online phase collects $T$ samples such that the total online samples over $K$ cycles is $KT$. In this phase, we run a trust-region for a few episodes to control the error in the improvement. Particularly, we require that the online phase does not degrade the policy significantly and improves upon the offline policy. From the online complexity of policy gradient methods~\cite{qiu2021finite,yuan2022general}, they ensure that the online suboptimality $\varepsilon_\text{on}$ satisfies $\varepsilon_\text{on}=\tilde{\mathcal{O}}(\sqrt{H^3\varsigma/T})$, where $H:=1/(1-\gamma)$ is the effective horizon and $\varsigma>0$ is a policy class parameter. For example, $\varsigma=\tilde{\mathcal{O}}(|\mathcal{S}||\mathcal{A}|)$ if the parametric model for the policy is neural network. Based on this, we can obtain the online sample complexity per cycle. We are now ready to present the main result to characterize the total sample complexity in the following.

\begin{theorem}\label{theorem_3} (Total Sample Complexity)
    Suppose that the offline error satisfies $\bar{\epsilon}^\text{adv}=\mathcal{O}(1/\sqrt{E|\mathcal{D}|})$ and that the online error satisfies $\varepsilon_\text{on}=\tilde{\mathcal{O}}(\sqrt{H^3\varsigma/T})$. 
    With the cycle complexity from Theorem~\ref{theorem_2}, the total sample complexity incurred by COOPO is $\tilde{\mathcal{O}}\bigg(\frac{(1-\gamma)c^2_1}{\lambda\kappa\varepsilon^2}\text{log}\frac{1}{\varepsilon}+\frac{(1-\gamma)H^3\varsigma}{\lambda\kappa\varepsilon^2}\text{log}\frac{1}{\varepsilon}\bigg)$ by setting $\varepsilon_\text{on}=\mathcal{O}(\varepsilon)$ and $\bar{\epsilon}^\text{adv}=\mathcal{O}(\varepsilon)$, where $c_1>0$ is a constant on offline setting.
\end{theorem}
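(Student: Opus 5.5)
The approach is a direct accounting argument: combine the cycle count from Theorem~\ref{theorem_2} with the per-cycle sample requirements implied by the two error assumptions. First, fix the target accuracy $\varepsilon$. Split it exactly as in the discussion following Theorem~\ref{theorem_2}, i.e., require $(1-\frac{\lambda\kappa}{1-\gamma})^K\Delta_0\le \varepsilon/2$ and $\frac{\Lambda(1-\gamma)}{\lambda\kappa}\le\varepsilon/2$. The first condition gives
\[K=\Big\lceil \tfrac{1-\gamma}{\lambda\kappa}\log\tfrac{2\Delta_0}{\varepsilon}\Big\rceil=\mathcal{O}\Big(\tfrac{1-\gamma}{\lambda\kappa}\log\tfrac1\varepsilon\Big),\]
using $\log(1/(1-x))\ge x$ for $x=\frac{\lambda\kappa}{1-\gamma}\in(0,1]$.

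The second condition is a budget on $\Lambda$, which consists of three terms. I will make each of them $\mathcal{O}(\varepsilon)$ by tying it to a controllable quantity.
\begin{itemize}
\item The advantage-error term $\bar\epsilon^{\text{adv}}/(1-\gamma)$ is controlled by the offline accuracy.
\item The online TV term $4\bar\epsilon\gamma\alpha^2/(1-\gamma)^2$ is treated as the online-phase error, so I identify the online phase's contribution with $\varepsilon_{\text{on}}$, in the sense that the PPO phase must not degrade the policy by more than $\varepsilon_{\text{on}}$.
\item The KL term involving $C\sqrt{2\delta}$ is handled by choosing the trust-region radius $\delta$ small enough, e.g.\ $\sqrt{\delta}=\mathcal{O}(\varepsilon(1-\gamma)^2/(C\bar\epsilon))$. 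This costs no samples.
\end{itemize}
Setting $\bar\epsilon^{\text{adv}}=\mathcal{O}(\varepsilon)$ and $\varepsilon_{\text{on}}=\mathcal{O}(\varepsilon)$, with constants absorbing $\lambda\kappa/(1-\gamma)$ and the horizon factors, then ensures $\Delta_K\le\varepsilon$.

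Next, invert the two error models to get per-cycle sample counts. From $\bar\epsilon^{\text{adv}}=\mathcal{O}(1/\sqrt{E|\mathcal{D}|})$, writing the hidden constant as $c_1$, i.e.\ $\bar\epsilon^{\text{adv}}\le c_1/\sqrt{E|\mathcal{D}|}$, the per-cycle offline sample count is $E|\mathcal{D}|=\mathcal{O}(c_1^2/\varepsilon^2)$. From $\varepsilon_{\text{on}}=\tilde{\mathcal{O}}(\sqrt{H^3\varsigma/T})$, the per-cycle online sample count is $T=\tilde{\mathcal{O}}(H^3\varsigma/\varepsilon^2)$. Multiplying each by $K$ and summing gives
\[K(E|\mathcal{D}|+T)=\tilde{\mathcal{O}}\Big(\tfrac{(1-\gamma)c_1^2}{\lambda\kappa\varepsilon^2}\log\tfrac1\varepsilon+\tfrac{(1-\gamma)H^3\varsigma}{\lambda\kappa\varepsilon^2}\log\tfrac1\varepsilon\Big),\]
which is the claimed bound. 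I keep the $\log(1/\varepsilon)$ explicit even though $\tilde{\mathcal{O}}$ could absorb it, to match the statement.

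The main obstacle is justifying that the online phase's contribution to $\Lambda$ (the $\alpha^2$ term) is genuinely governed by $\varepsilon_{\text{on}}$, and that the per-cycle accuracy requirements do not have to shrink with $K$. I would argue the latter from the contraction structure of Theorem~\ref{theorem_2}: the recursion $\Delta_{k+1}\le(1-\frac{\lambda\kappa}{1-\gamma})\Delta_k+\Lambda$ yields a steady-state error $\Lambda(1-\gamma)/(\lambda\kappa)$ that does not accumulate in $K$. A uniform per-cycle accuracy of order $\varepsilon$ therefore suffices, rather than $\varepsilon/K$, and this is precisely what keeps the total only logarithmically larger than the per-cycle cost. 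For the former, I would follow the paper's convention of treating $\alpha$ as a trust-region constant: I choose $T$ large enough that PPO's finite-sample guarantee keeps the online degradation below $\varepsilon_{\text{on}}$, and identify the $\alpha^2$ term with that guarantee rather than deriving it from first principles.
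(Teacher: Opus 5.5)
Your counting argument is the same as the paper's proof. You take $K=\mathcal{O}\big(\frac{1-\gamma}{\lambda\kappa}\log\frac{1}{\varepsilon}\big)$ cycles from Theorem~\ref{theorem_2}, a per-cycle offline cost $E|\mathcal{D}|=c_1^2/(\bar{\epsilon}^{\text{adv}})^2$ and a per-cycle online cost $T=\tilde{\mathcal{O}}(H^3\varsigma/\varepsilon_{\text{on}}^2)$. You then set both accuracies to $\mathcal{O}(\varepsilon)$ and multiply. Your use of $\log\frac{1}{1-x}\ge x$ just makes the paper's ``$\approx$'' rigorous.

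\textbf{The step that fails.} In your third bullet you remove the $\frac{C\gamma\sqrt{2\delta}\,\bar{\epsilon}}{(1-\gamma)^2}$ term by shrinking $\delta$, and you say this costs nothing. Within Theorem~\ref{theorem_2}, however, the trust-region radius is coupled to $\kappa$. The offline gain $G^{\text{off}}_k=\mathbb{E}_{s\sim d^{\pi_k}}[D_{KL}(\pi_{k+1/2}\|\pi_k)(s)]$ is exactly the divergence capped by the constraint $D_{KL}(\pi_{k+1/2}\|\pi_k)(s)\le\delta$, which is the same bound used to control $\zeta_k$. Hence $\kappa\Delta_k\le G^{\text{off}}_k\le C\delta$. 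The factor $C$ covers the case where the constraint holds only under $\rho_\beta$, via the change of measure used in Theorem~\ref{theorem_2}. At $k=0$ this gives $\kappa\le C\delta/\Delta_0$.

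This has two consequences:
\begin{enumerate}
\item Taking $\sqrt{\delta}=\mathcal{O}(\varepsilon)$ forces $\kappa=\mathcal{O}(\varepsilon^2)$. Then $\lambda\kappa/(1-\gamma)$ can no longer be absorbed into constants, and $K$ picks up an extra factor $\varepsilon^{-2}$.
\item Worse, the $\delta$-part of the neighborhood radius satisfies
\[
\frac{1-\gamma}{\lambda\kappa}\cdot\frac{C\gamma\sqrt{2\delta}\,\bar{\epsilon}}{(1-\gamma)^2}\;\ge\;\frac{\sqrt{2}\,\gamma\,\bar{\epsilon}\,\Delta_0}{(1-\gamma)\,\lambda\sqrt{\delta}},
\]
which grows as $\delta\to 0$. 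Shrinking $\delta$ therefore moves you away from $\Delta_K\le\varepsilon$, not toward it.
\end{enumerate}

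\textbf{How the paper handles it.} The paper's proof never addresses this term. It simply asserts that $\varepsilon_{\text{on}}=\mathcal{O}(\varepsilon)$ and $\bar{\epsilon}^{\text{adv}}=\mathcal{O}(\varepsilon)$ give $\Lambda=\tilde{\mathcal{O}}(\varepsilon)$. The $\sqrt{\delta}$ contribution is thereby left implicitly inside the $\varepsilon$ budget.

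The $\alpha^2$ contribution is treated the same way. You correctly flag that it is identified with $\varepsilon_{\text{on}}$ by convention rather than derived; more online episodes do not by themselves shrink the total-variation movement $\alpha$.

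\textbf{What to do instead.} To reproduce the theorem, drop the $\delta$ bullet and state the premise explicitly. Assume that, for the given $\lambda,\kappa,\delta$, the $\delta$-term and $\alpha$-term of $\frac{(1-\gamma)\Lambda}{\lambda\kappa}$ are already at most a fixed fraction of $\varepsilon$. Your counting then yields the stated bound. What you cannot claim is that this residual can be removed for free by tuning $\delta$ while holding $\kappa$ fixed.
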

\begin{proof}
    We have known that in the offline phase, the sample processed per cycle is $E|\mathcal{D}|$. As $\bar{\epsilon}^\text{adv}=\mathcal{O}(\frac{1}{\sqrt{E|\mathcal{D}|}})$, we can obtain $\bar{\epsilon}^\text{adv}\leq\frac{c_1}{\sqrt{E|\mathcal{D}|}}$. Therefore, it is immediately attained that $E\geq \frac{c^2_1}{(\bar{\epsilon}^\text{adv})^2|\mathcal{D}|}$. Hence, the per cycle offline sample complexity is $\mathcal{C}^\text{cycle}_\text{off}=\frac{c^2_1}{(\bar{\epsilon}^\text{adv})^2}$. Additionally, for the online phase, the online error is required to satisfy $\varepsilon_\text{on}=\tilde{\mathcal{O}}(\sqrt{\frac{H^3\varsigma}{T}})$ such that $\mathcal{C}^\text{cycle}_{\text{on}}=\tilde{\mathcal{O}}(\frac{H^3\varsigma}{\varepsilon_\text{on}^2})$. To ensure that $\Lambda=\tilde{\mathcal{O}}(\varepsilon)$, we let $\varepsilon_\text{on}=\mathcal{O}(\varepsilon)$ and $\bar{\epsilon}^\text{adv}=\mathcal{O}(\varepsilon)$, which lead to the following total sample complexity
    \begin{equation}
        \begin{split}\mathcal{C}_{\text{total}}&=K\mathcal{C}^\text{cycle}_\text{off}+K\mathcal{C}^\text{cycle}_{\text{on}}\\&=\bigg[\frac{1-\gamma}{\lambda\kappa}\text{log}\frac{2\Delta_0}{\varepsilon}\bigg]\bigg(\mathcal{O}\bigg(\frac{c^2_1}{\varepsilon^2}\bigg)+\tilde{\mathcal{O}}\bigg(\frac{H^3\varsigma}{\varepsilon^2}\bigg)\bigg).
        \end{split}
    \end{equation}
    The last equality is because of the cycle complexity obtained in Theorem~\ref{theorem_2}. The desirable result is immediately obtained by simplifying the bound.
\end{proof}
Theorem~\ref{theorem_3} reveals the total sample complexities over $K$ cycles of offline learning and online fine-tuning, \[\tilde{\mathcal{O}}\bigg(\underbrace{\frac{(1-\gamma)c^2_1}{\lambda\kappa\varepsilon^2}\text{log}\frac{1}{\varepsilon}}_{offline}+\underbrace{\frac{(1-\gamma)H^3\varsigma}{\lambda\kappa\varepsilon^2}\text{log}\frac{1}{\varepsilon}}_{online}\bigg).\] Surprisingly, the offline sample complexity is independent of $|\mathcal{D}|$ as the data reuse amortizes the cost. Both offline and online phases scale with $\tilde{\mathcal{O}}(\frac{1}{\varepsilon^2}\text{log}\frac{1}{\varepsilon})$, 
which matches the complexity bound in~\cite{xu2020improving}. 
The bound reveals that although KL regularization appears only in the offline phase, it affects sample complexity in both phases. A large $\lambda$ reduces cost but leads to conservative update and lower $\kappa$, while a small $\lambda$ accelerates learning but increases online sample use, highlighting a key trade-off between efficiency and learning performance.
From Table~\ref{table:complexity_com}, we know that pure online PPO has sample complexity of $\tilde{\mathcal{O}}(\frac{H^4\varsigma}{\varepsilon^2})$, COOPO's online phase reduces by factor $\frac{H\lambda\kappa}{(1-\gamma)\text{log}(1/\varepsilon)}$. Also, COOPO outperforms PPO in terms of horizon scaling with only $H^3$, which makes it suitable for long horizon ($H\gg 1$). This stems from the offline warm start. Theorem~\ref{theorem_3} also implies that online term dominates in low-$\varepsilon$ regime, consistent with empirical results.

\section{Numerical Results}
We evaluate COOPO on simulators to assess: (1) its performance against offline and offline-to-online RL baselines; (2) its asymptotic performance; (3) the role of offline–online cycles; (4) its ability to reduce online interaction compared to PPO; and (5) the impact of the multiplier $\lambda$ in offline learning. Experimental details are provided in the sequel.

\noindent\textbf{Benchmark anvironments.}
Figure~\ref{fig:mujoco_envs} shows the environments we resort to validate COOPO in this work, including Half-cheetah, Hopper, and Walker. While demonstrated in MuJoCo locomotion tasks, the cyclic offline-online paradigm generalizes to CPS domains such as adaptive robotic control, energy management, or manufacturing scheduling, where real-time policy refinement must go along with safety and latency constraints. Please see~\cite{todorov2012mujoco} for more information about these environments.

\noindent\textbf{Model architecture.} 
In this work, we follow the standard setup as widely used in RL domain to parameterize the actor and critic networks. Specifically, they are all multi-layer perceptron (MLP) models. The architecture is shown in Table~\ref{table:mlp_model}.

\begin{figure}
    \centering
    \includegraphics[width=1\linewidth]{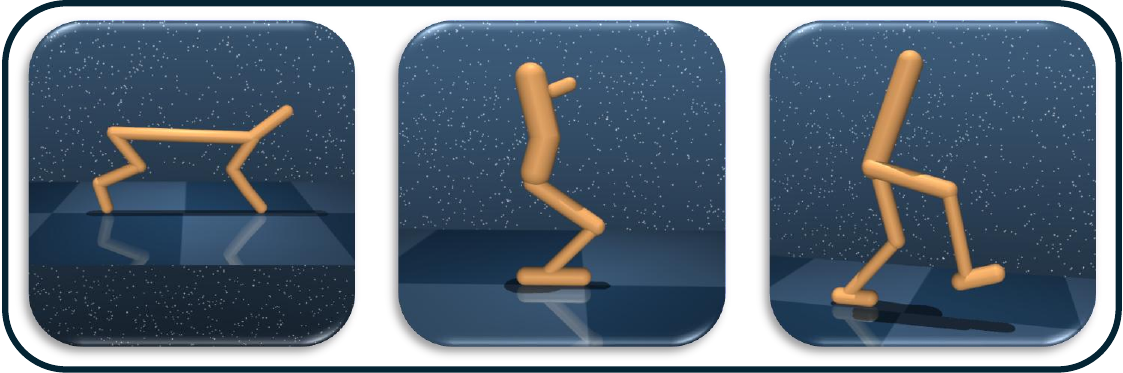}
    \caption{Mujoco environments used in this work for evaluation: Half-cheetah, Hopper, and Walker.}
    \label{fig:mujoco_envs}
\end{figure}

\begin{table}[h]
\caption{MLP model architecture}
\begin{center}
\begin{threeparttable}
\begin{tabular}{c c}
    \toprule
    \textbf{Parameter} & \textbf{Value} \\ \midrule
      \# of layers   &  4                     \\
      \# of hidden units per layer   &  256                                   \\ 
      Activation function & ReLU \\
      \bottomrule

\end{tabular}
\end{threeparttable}
\end{center}
\label{table:mlp_model}
\end{table}

\textbf{Hyperparameters.} The hyperparameter setting is in Table~\ref{table:hyper_params}. Note that in this context we summarize key hyperparameters in RL setting. Though a hyperparameter optimization method can likely be beneficial for the performance improvement, we tune them manually in this work.

\begin{table}[h]
\caption{Hyperparameters}
\begin{center}
\begin{threeparttable}
\begin{tabular}{c c}
    \toprule
    \textbf{Hyperparameter.} & \textbf{Value} \\ \midrule
      Optimizer   &  Adam                     \\
      Learning rate   &  3e-4                                   \\ 
     Batch size (off) &  512\\
     Batch size (on) &  64\\
     \# of epochs (off) & 100 \\
     size of dataset (off) & 1M \\
     \# of epochs (on) & 5 \\
     \# of episodes (on) & 5\\
     \# of cycles & 500\\
     Initialization & Random\\
     Episode length (on) & 1M \\
     buffer size (on) & 512\\
     discount factor & 0.99\\
     beta & 0.99\\
     KL weight & 0.05\\

      \bottomrule
\end{tabular}
\end{threeparttable}
\end{center}
\label{table:hyper_params}
\vspace{-0.1in}
\end{table}

\noindent\textbf{Computing Infrastructure.}
All experiments were conducted on a workstation equipped with an NVIDIA GeForce RTX\,4090 GPU and an Intel(R) Core(TM)\,i7‑14700 CPU, with each run utilizing approximately 1.4\,GiB of GPU memory. The system was running Ubuntu\,22.04.4\,LTS\,(64‑bit) (Distributor\,ID:\,Ubuntu, Release:\,22.04). Offline datasets were obtained from the D4RL benchmark suite, while online training was performed using Gym environments (version\,0.23.1). The implementation leveraged standard deep learning and reinforcement learning frameworks, including PyTorch (version\,2.4.1), and other supporting libraries such as NumPy and Matplotlib.

\begin{table*}[h!]
\caption{Reward values of various algorithms on the D4RL locomotion benchmarks using medium, medium-replay, and medium-expert datasets. Each value is the mean return over seeds, averaged across the final evaluation episodes.}
\centering
\resizebox{\textwidth}{!}{%
\begin{tabular}{l
>{\columncolor{blue!30}}c 
>{\columncolor{blue!30}}c 
>{\columncolor{blue!30}}c 
>{\columncolor{blue!30}}c 
>{\columncolor{blue!30}}c 
>{\columncolor{green!15}}c 
>{\columncolor{green!15}}c 
>{\columncolor{green!15}}c 
>{\columncolor{green!15}}c 
>{\columncolor{green!15}}c 
>{\columncolor{green!15}}c 
>{\columncolor{orange!20}}c 
}
\toprule
\textbf{Dataset} &
\textbf{BC} &
\textbf{Onestep RL} &
\textbf{TD3+BC} &
\textbf{CQL} &
\textbf{IQL} &
\textbf{Cal-QL} &
\textbf{AWAC} &
\textbf{Uni-O4} &
\textbf{ODT} &
\textbf{PEX} &
\textbf{Off2ON} &
\textbf{COOPO (Ours)} \\
\midrule
\rowcolor{cyan!10}
halfcheetah-medium        & 4233.3 & 4828.4 & 4822.7 & 4371.2 & 4520.6 & 4754.7 & 4531.1 & \textbf{5277.5} & 4235.4 & 5084.2 & 3878.4 & 4947.1 \\
\rowcolor{pink!10}
hopper-medium             &  885.8 &  995.6 &  990.6 &  977.7 & 1092.3 & 1606.1 &  967.3 & 1740.8 &  781.7 &  937.8 & 1625.0 & \textbf{2014.7} \\
\rowcolor{yellow!10}
walker2d-medium           & 2748.3 & 2987.6 & 3052.0 & 2651.3 & 2718.4 & 3083.6 & 2910.2 & 3289.3 & 2625.7 & 2922.4 & 2413.6 & \textbf{3570.4} \\
\rowcolor{cyan!10}
halfcheetah-medium-replay & 1967.7 & 2053.5 & 2423.5 & 2477.9 & 3368.4 & 1795.2 & 2189.6 & 2911.2 & 2087.6 & 2995.8 & 2747.4 & \textbf{4602.5} \\
\rowcolor{pink!10}
hopper-medium-replay      &  308.5 & 1620.1 & 1016.0 & 1580.7 &  913.2 &  714.8 &  624.2 & 1197.7 &  698.8 &  360.3 &  322.1 & \textbf{1993.8} \\
\rowcolor{yellow!10}
walker2d-medium-replay    & 1037.3 & 1972.6 & 3252.7 & 3073.5 & 2076.4 & 2271.8 & 1075.8 & 3072.3 & 1380.2 & 2503.7 &  553.6 & \textbf{3491.2} \\
\rowcolor{cyan!10}
halfcheetah-medium-expert & 5192.7 & 8949.3 & 8674.5 & 8762.8 & 8278.3 & 8782.9 & 3976.5 & 8859.3 & 3566.1 & 2778.3 & 4333.0 & \textbf{9242.2} \\
\rowcolor{pink!10}
hopper-medium-expert      & 1269.7 & 2484.6 & 2359.7 & 2534.6 & 2201.3 & 2596.7 & 1339.3 & 2683.2 & 1556.3 & 2196.7 & 2697.0 & \textbf{3575.8} \\
\rowcolor{yellow!10}
walker2d-medium-expert    & 4936.6 & 5189.1 & 5056.0 & 4996.3 & 5033.0 & 5102.7 & 4643.2 & \textbf{5421.3} & 3040.0 & 3205.1 & 3934.8 & 5042.8 \\
\bottomrule
\end{tabular}%
}
\label{table:performance_comparison}
\end{table*}

\noindent\textbf{Comparative study.}
We compare \textcolor{orange!80!black}{\textbf{COOPO}} with multiple offline and offline-to-online algorithms. \textbf{\textcolor{blue!83!black}{For offline RL}}, the methods encompass Behavior Cloning (BC) ~\cite{torabi2018behavioral}, Onestep RL ~\cite{eysenbach2023connection}, TD3+BC~\cite{fujimoto2021minimalist}, Conservative Q-Learning (CQL)~\cite{kumar2020conservative} and Implicit Q-Learning (IQL)~\cite{kostrikov2021offline}. \textbf{\textcolor{green!60!black}{For offline-to-online RL}}, we also include multiple popular baseline methods that have been adopted in many works: Cal-QL~\cite{nakamoto2023cal}, Advantage Weighted Actor-Critic (AWAC)~\cite{nair2020awac}, Uni-O4~\cite{lei2023uni}, Online Decision Transformer (ODT)~\cite{zheng2022online}, PEX~\cite{zhang2023policy}, and Off2ON~\cite{lee2022offline}. 
We first evaluate whether COOPO outperforms or remains competitive with offline and offline-to-online approaches. As shown in Table~\ref{table:performance_comparison} COOPO achieves top performance on 7 out of 9 tasks, surpassing all one-step methods like IQL and Onestep RL, which often yield conservative policies due to behavior constraints. COOPO also outperforms pure offline RL methods, highlighting the benefit of online fine-tuning for effective exploration. Compared to offline-to-online baselines, COOPO consistently excels across most environments, remaining competitive with Uni-O4 on select tasks. These gains stem from its cyclic synergy between offline and online learning, which mitigates distributional shift and catastrophic forgetting. Figure~\ref{fig:asympt_curve} further confirms COOPO's superior asymptotic performance demonstrated in HalfCheetah compared to AWAC and IQL, aligning with trends in Figure~\ref{fig:coopo_curve} and emphasizing the benefits of data reuse and stable policy refinement.

\begin{figure}[h]
    \centering
    \includegraphics[width=0.8\linewidth]{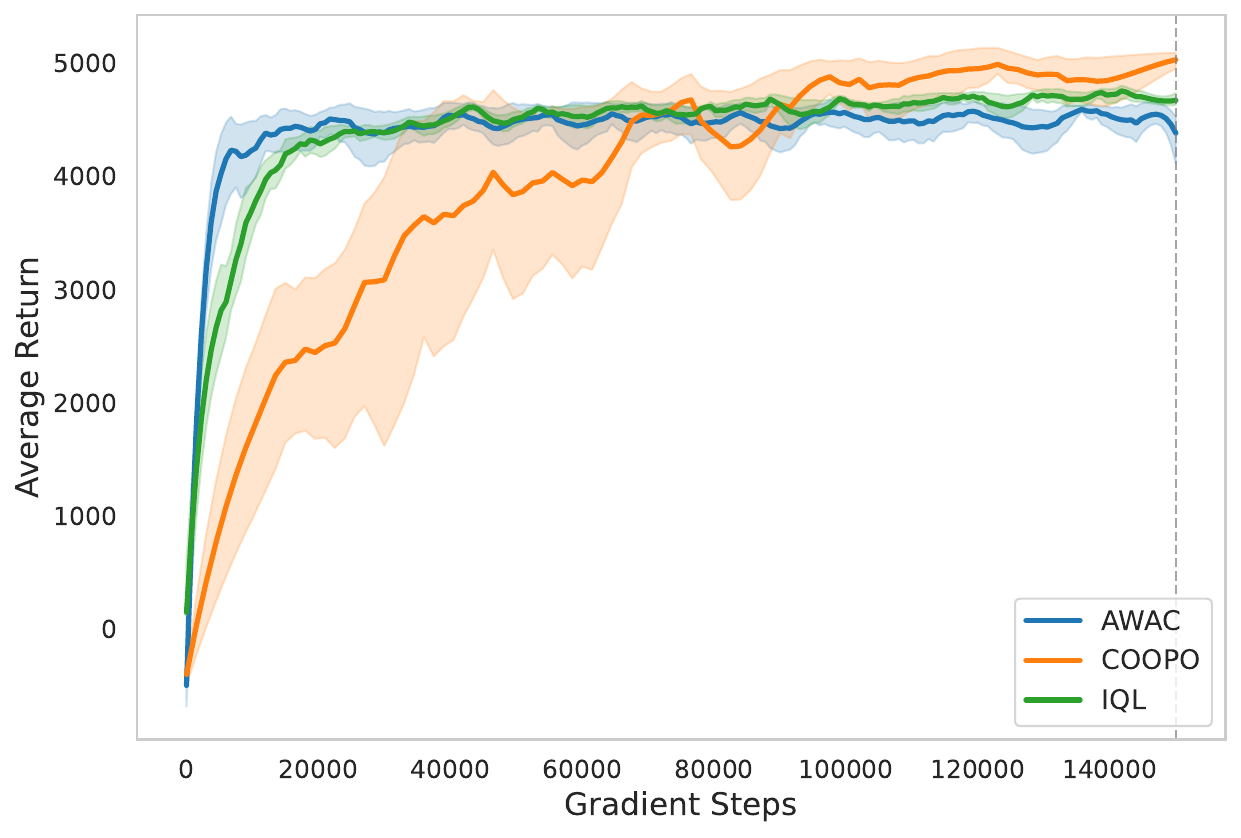}
    \caption{Average return vs. gradient steps on HalfCheetah for AWAC, COOPO, and IQL.  COOPO achieves the best final performance with improved sample efficiency, while AWAC and IQL converge faster but plateau earlier.}
    \label{fig:asympt_curve}
\end{figure}

\noindent\textbf{Offline and Online Training Phases in COOPO.}
Figure~\ref{fig:offline-online} illustrates the training dynamics of our proposed method COOPO, which alternates between offline and online training phases, on the HalfCheetah, Hopper, and Walker2D tasks using the D4RL medium datasets. In each plot, the red{red solid lines} represent online training segments, while the blue{blue dashed lines} indicate offline updates between online training phases. Across all three environments, the offline updates consistently improve policy performance. Although there may be rare cases where offline updates momentarily hinder online training, overall they help the online phase achieve higher rewards. The results highlight the importance of COOPO's alternating architecture between offline and online training phases to achieve better performance and more efficient learning.

\begin{figure}[t]
  \centering
  \begin{subfigure}[b]{0.48\linewidth}
    \centering
    \includegraphics[width=\linewidth]{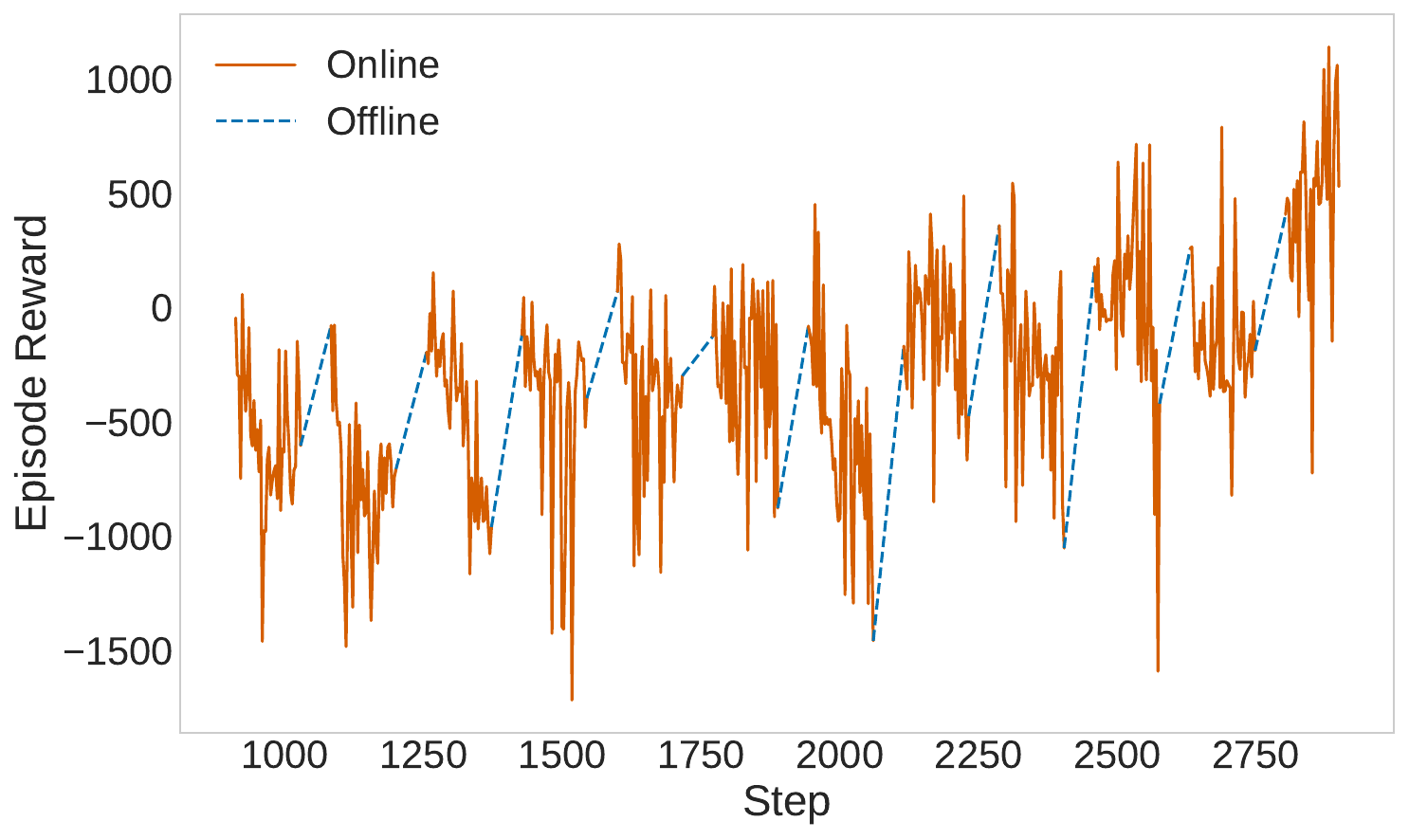}
    \caption{HalfCheetah}
    \label{fig:halfcheetah}
  \end{subfigure}\hfill
  \begin{subfigure}[b]{0.48\linewidth}
    \centering
    \includegraphics[width=\linewidth]{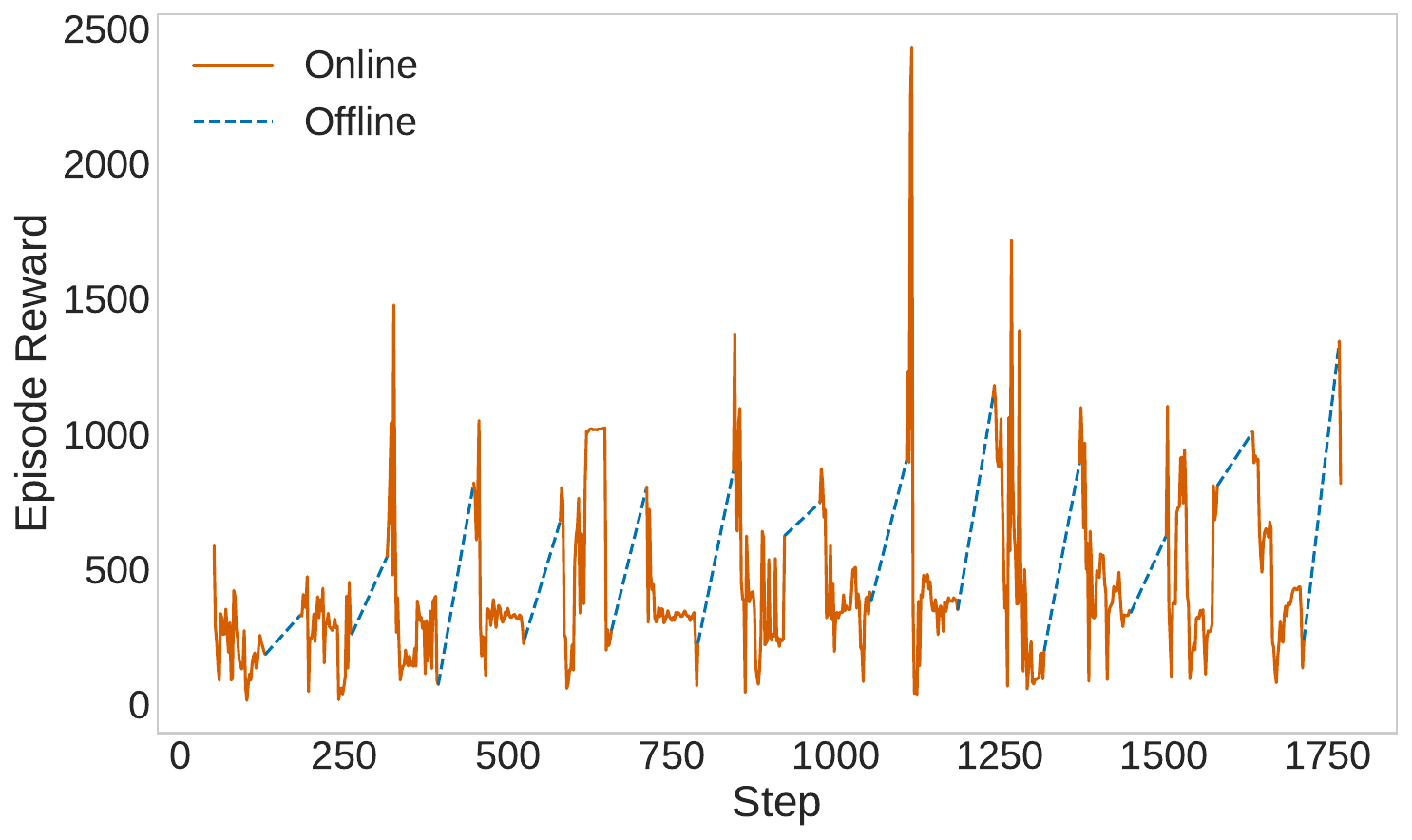}
    \caption{Hopper}
    \label{fig:hopper}
  \end{subfigure}

  \medskip 

  \hspace*{\fill}
  \begin{subfigure}[b]{0.48\linewidth}
    \centering
    \includegraphics[width=\linewidth]{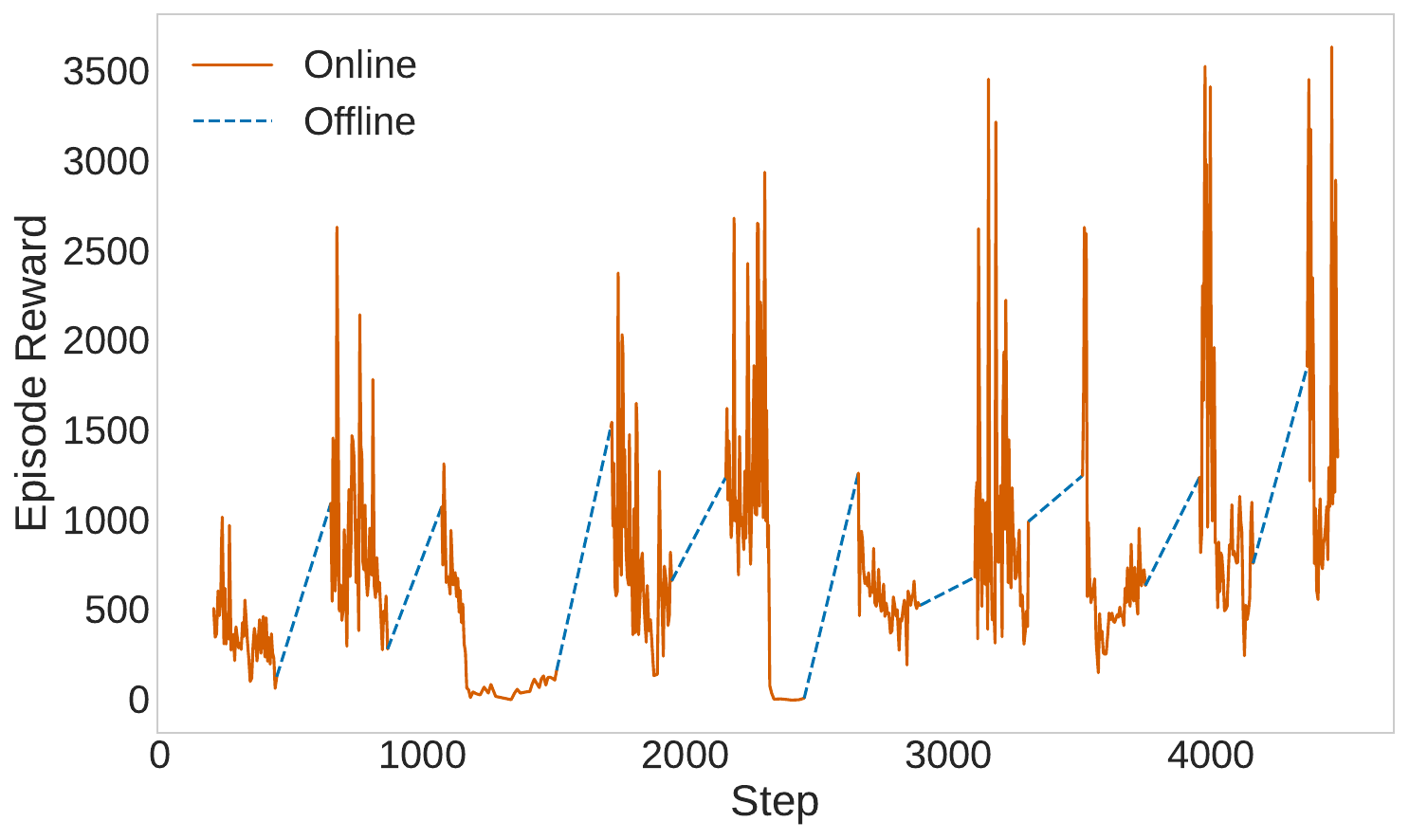}
    \caption{Walker2D}
    \label{fig:walker}
  \end{subfigure}
  \hspace*{\fill}

  \caption{Illustration of the training dynamics of COOPO on three D4RL medium datasets. 
  The red solid lines represent the online training segments, while the blue dashed lines indicate the offline updates bridging the gaps between online phases. 
  For clarity, the figure is using a single training seed from each environment.}
  \label{fig:offline-online}
  \vspace{-0.15in}
\end{figure}

\noindent\textbf{Ablation studies.}
We now examine the interplay between offline and online training phases in each cycle. As shown in Figure~\ref{fig:offline-online-cycles} and outlined in Algorithm~\ref{alg:coopo}, each COOPO cycle uses a small, fixed number of online and offline rollout episodes (T = 5 to 25, E = 25 to 100), depending on the task. This bounded sampling budget reduces total environment interactions by approximately 50\% compared to PPO while still allowing meaningful policy refinement.

We evaluate performance across various combinations of offline epochs ($E$) and online fine-tuning steps ($T$). When the offline training epochs are fixed at 100, increasing the frequency of online fine-tuning (e.g, $T=5$ vs. $T=25$) improves performance, particularly in the low-$\varepsilon$ regime near optimality, supporting the theoretical insights from Theorem~\ref{theorem_3}. This highlights the greater impact of online updates in the low-$\varepsilon$ regime, as predicted by the sample complexity analysis. Conversely, when the frequency of online fine-tuning is fixed, increasing offline training epochs boosts performance by anchoring the policy more effectively to the static dataset, thus mitigating distributional shift and catastrophic forgetting. Interestingly, when offline epochs increase (e.g., from $E=75$ to $E=100$) while online episodes decrease (e.g., $T=15$ to $T=5$), performance improves early in training but plateaus later, indicating diminishing returns from offline updates alone. In contrast, increasing online fine-tuning leads to more sustained gains, as seen when comparing $T=20, E=50$ with $T=5, E=100$. These findings demonstrate COOPO’s capacity to reduce online environment interaction by reusing offline data. However, if offline training is too sparse, even frequent online fine-tuning may not be sufficient, which is evident when comparing $T=20, E=25$ with $T=5, E=100$. Therefore, to deploy COOPO effectively, it is beneficial to emphasize offline training in the early (high-$\varepsilon$) phase and prioritize online fine-tuning in the later (low-$\varepsilon$) phase.

Figure~\ref{fig:coopo_vs_ppo} compares the number of trajectories required by COOPO and PPO to achieve high performance. COOPO consistently attains higher rewards using significantly fewer trajectories, highlighting its sample efficiency. In contrast, PPO suffers from high sample complexity due to its on-policy nature. These results underscore the effectiveness of COOPO’s offline data reuse and the benefits of the proposed looped synergy. Moreover, they empirically validate the theoretical claim from Theorem~\ref{theorem_3} that COOPO reduces the required online interactions by a constant factor. We now answer the last question of how the key $\lambda$ value in the offline learning affects the overall performance. 

\begin{figure}[h]
    \centering
    \includegraphics[width=0.8\linewidth]{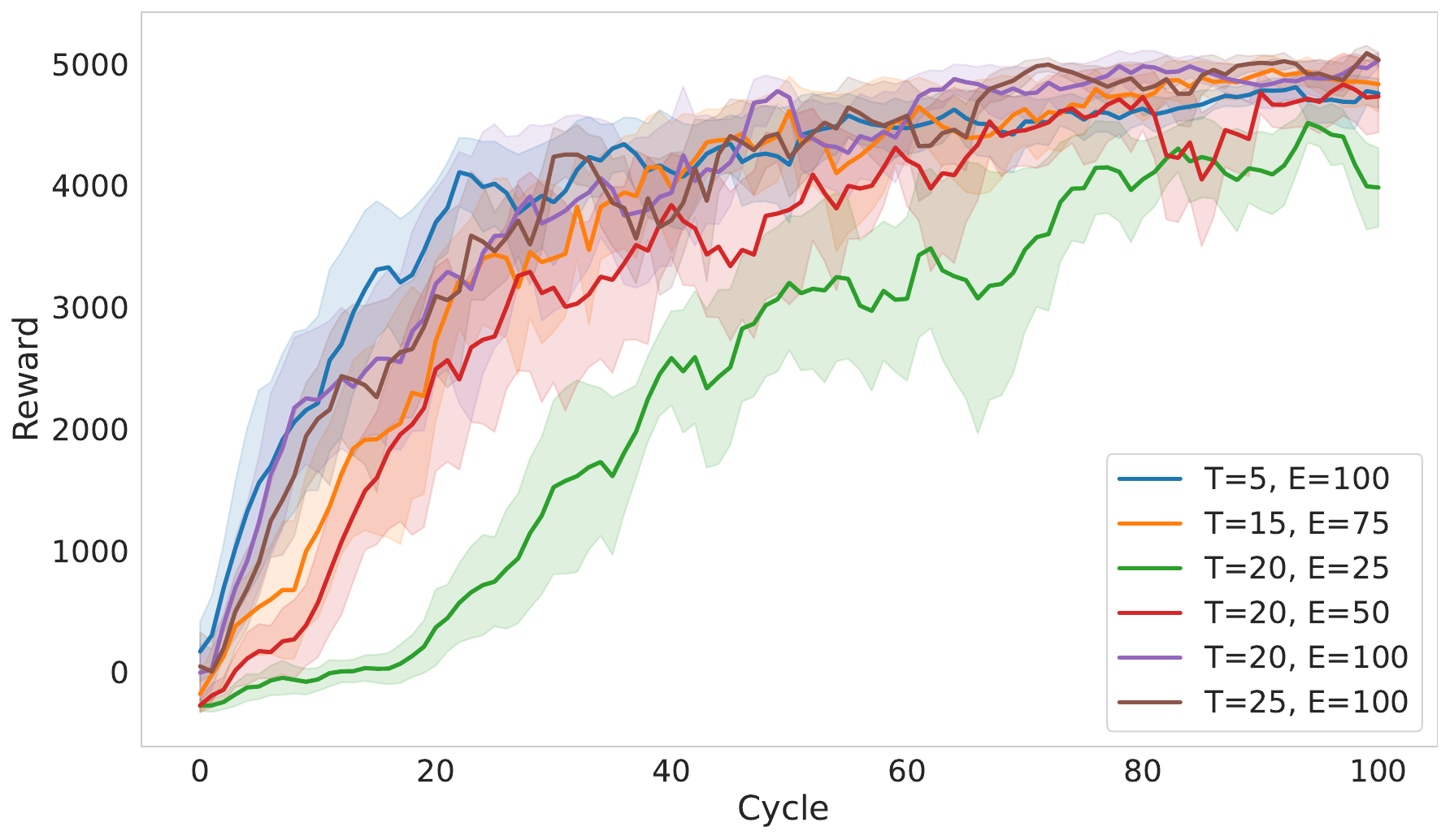}
    \caption{
        Training performance across different online-offline cycle combinations on the \textit{HalfCheetah} environment.
        Increasing offline cycles generally improves stability and reduces the need for extensive online updates.
    }
    \label{fig:offline-online-cycles}
\end{figure}

\begin{figure}[h]
    \centering
    \includegraphics[width=0.8\linewidth]{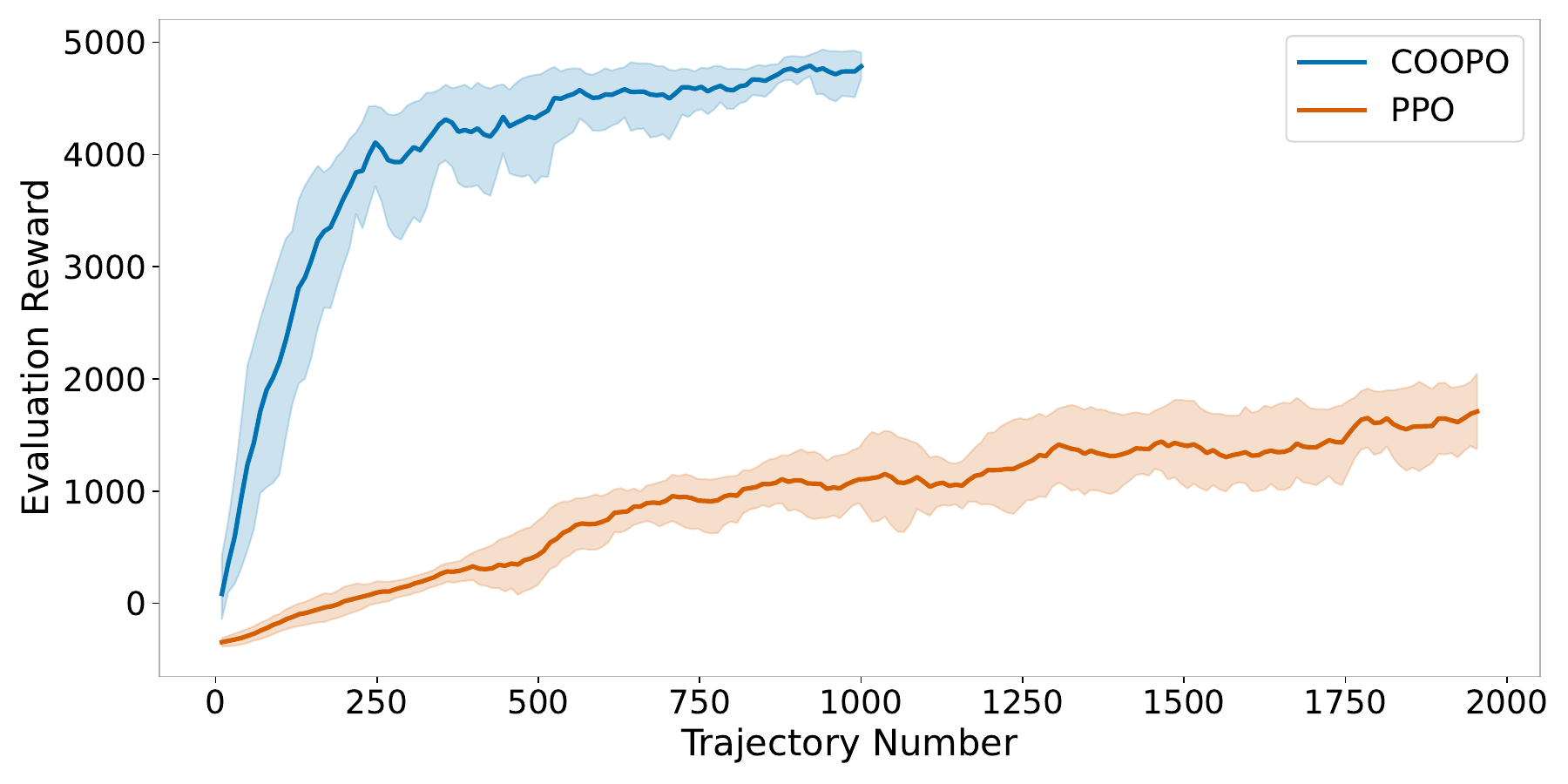}
    \caption{
        Training performance against trajectory number between COOPO vs. PPO on the \textit{HalfCheetah} environment.
    }
    \label{fig:coopo_vs_ppo}
\end{figure}
\begin{figure}[h]
  \centering
  \begin{subfigure}[t]{0.48\linewidth}
    \centering
    \includegraphics[width=\linewidth]{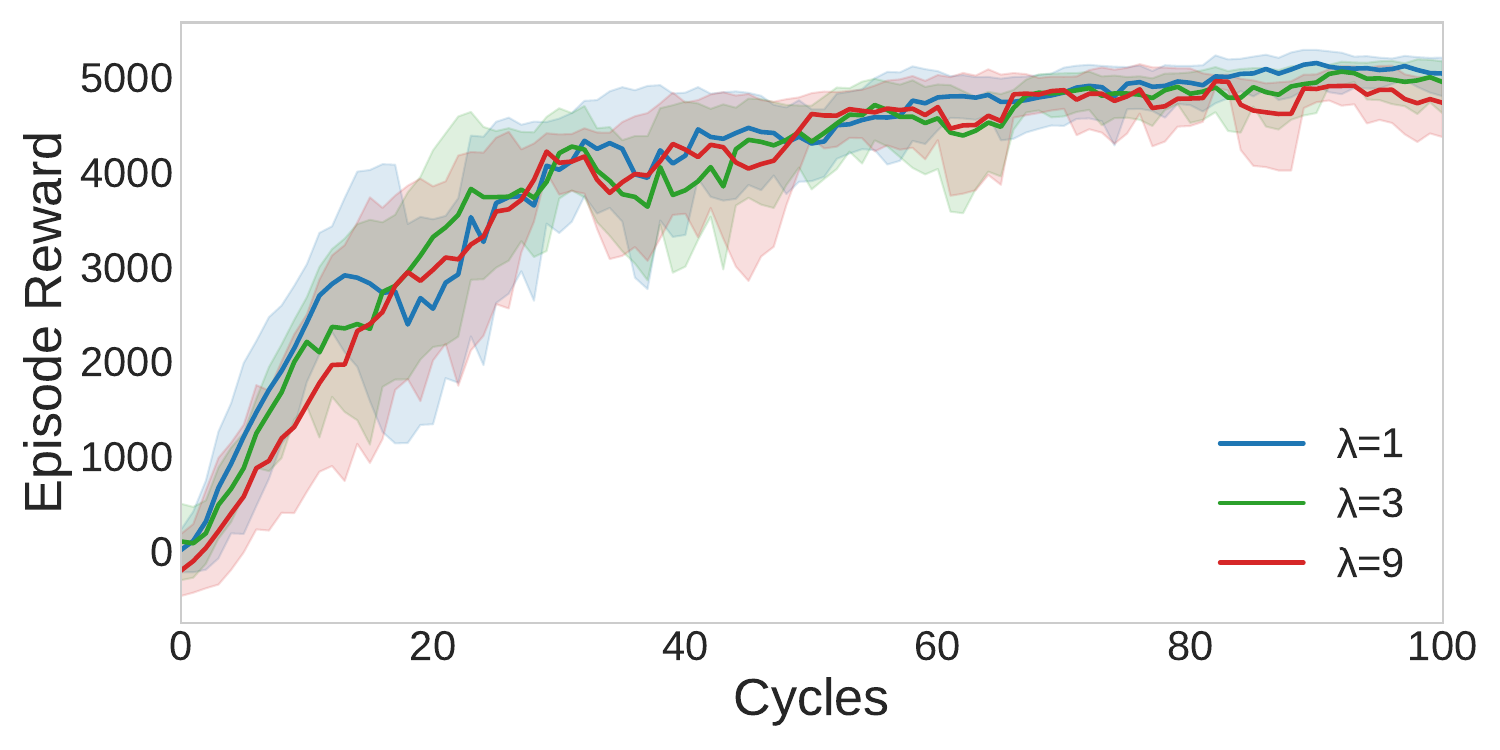}
    \caption{Training curves for different $\lambda$ values. Shaded regions show reward ranges across seeds.}
    \label{fig:coopo-line}
  \end{subfigure}
  \hfill
  \begin{subfigure}[t]{0.48\linewidth}
    \centering
    \includegraphics[width=\linewidth]{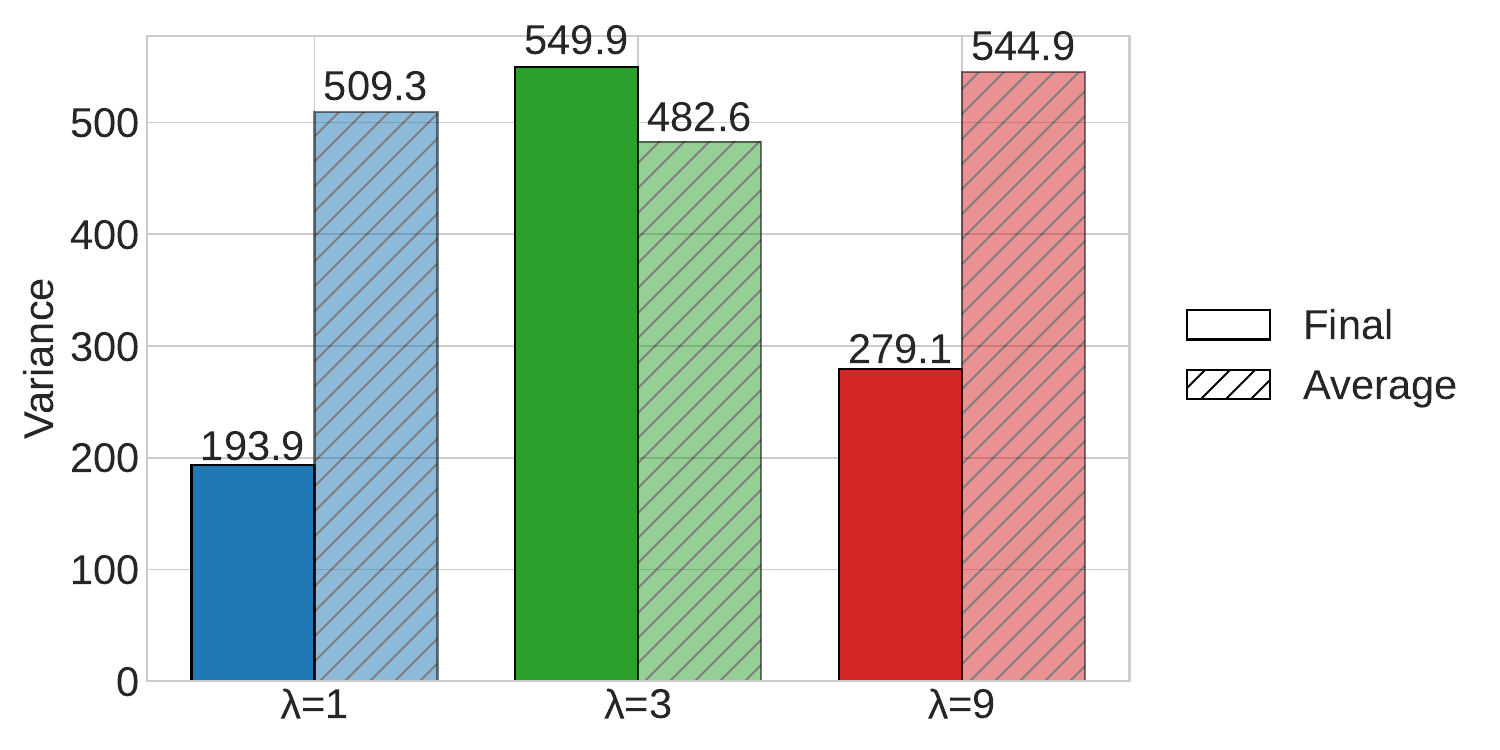}
    \caption{Final reward variance vs. average variance across seeds for each $\lambda$.}
    \label{fig:coopo-bar}
  \end{subfigure}
  \caption{Performance comparison in the \textit{HalfCheetah} environment for different $\lambda$ values, averaged over multiple seeds.}
  \label{fig:coopo_lambda}
  \vspace{-0.15in}
\end{figure}

Figure~\ref{fig:coopo_lambda} (a) shows that all $\lambda$ values eventually reach high episode rewards, but smaller $\lambda$ (e.g., $\lambda=1$) leads to faster convergence, while larger (e.g., $\lambda=9$) results in more conservative updates. Figure~\ref{fig:coopo_lambda} (b) reveals that $\lambda=3$ strikes a balance by achieving relatively fast convergence with the lowest final and average variance across seeds, indicating more stable and consistent policy updates compared to the more variable results of the other two cases. 

\noindent\textbf{Limitations.} While COOPO enhances sample efficiency, its effectiveness is constrained by the quality and coverage of the offline dataset poorly curated data may trap policies in local optima, limiting online gains. The approach also struggles with sparse-reward environments due to limited exploration during brief online phases, hindering novel discovery. Additionally, computational overhead from repeated offline retraining increases wall-clock time, and static datasets cannot dynamically incorporate online experience, capping adaptability in non-stationary tasks.

\section{Conclusions}
COOPO introduces a cyclic synergy between offline and online reinforcement learning, obtaining stable policy improvement under limited interaction budgets. 
By alternating KL-regularized offline optimization with trust-region online fine-tuning, the framework mitigates distributional drift, prevents catastrophic forgetting, and provides a stabilizing corrective mechanism when online exploration becomes trapped or noisy. 
Theoretically, COOPO offers guarantees on bounded reduced online interaction and a performance lower bound, which provides a principled foundation for reliable deployment in cyber-physical systems. Empirically, COOPO achieves consistently strong returns and sample efficiency across diverse benchmarks, outperforming recent state-of-the-art offline-to-online RL approaches. 
These results highlight COOPO as a practical and robust solution for safe and adaptive learning in real-world CPS.

\bibliographystyle{IEEEtran}
\bibliography{COOPO}

\end{document}